\documentclass[conference]{IEEEtran}
\IEEEoverridecommandlockouts

\newcommand{\myparatight}[1]{\smallskip\noindent{\bf {#1}:}~}

\usepackage[skip=0pt]{caption}

\captionsetup[subfigure]{skip=0pt} 
\captionsetup[subtable]{skip=0pt}

\usepackage{amsthm}
\usepackage{amsmath}
\usepackage{amssymb}
\usepackage{amsmath} 
\usepackage{amsthm}

\newtheorem{theorem}{Theorem}

\usepackage{booktabs} 
\usepackage{array} 
\usepackage{multirow}
\usepackage{graphics}
\usepackage{graphicx}

\usepackage{algorithm}
\usepackage{algpseudocode}

\usepackage{amsfonts}
\usepackage{color, url}

\usepackage{cite}

\usepackage{caption}
\usepackage{subcaption}

\usepackage{tabto}
\usepackage{xcolor, colortbl}
\usepackage{enumitem}
\usepackage{bbm}
\usepackage{dsfont}

\usepackage{bm}
\usepackage[mathscr]{eucal}

\usepackage{makecell} 
\usepackage{amssymb}  
\usepackage{pifont} 
\usepackage{xcolor} 
\usepackage{amsmath}

\algnewcommand\algorithmicforpara{\textbf{for}}
\algnewcommand\algorithmicdoinparallel{\textbf{do in parallel}}
\algdef{S}[FOR]{ForParallel}[1]{\algorithmicforpara\ #1\ \algorithmicdoinparallel}

\usepackage{xspace}

\newcommand{\alg}{\textsf{EquFL}\xspace}

\usepackage{color, colortbl}
\definecolor{greyL}{RGB}{230,248,255}
\definecolor{ffe8e7L}{RGB}{255,255,248}

\usepackage{wrapfig}

\usepackage{bigstrut,multirow,rotating}
\usepackage{placeins}
\usepackage{pgfplots}
\pgfplotsset{compat=1.17}
\setcounter{secnumdepth}{4}
\makeatletter
\@namedef{r@tocindent4}{0pt}
\makeatother

\usepackage{hyperref}
\hypersetup{
	colorlinks=true,
	linkcolor=blue,
	filecolor=magenta,      
	urlcolor=cyan,
        citecolor=blue,
	pdftitle={Overleaf Example},
	pdfpagemode=FullScreen,
}

\newtheorem{assumption}{Assumption}

\newtheorem{lemma}{Lemma}

\newtheorem*{remark}{Remark}

\definecolor{PineGreen}{RGB}{0,139,114}
\definecolor{BrickRed}{RGB}{140,55,62}

\newcommand{\cmark}{{\color{PineGreen}\ding{51}}}%
\newcommand{\xmark}{{\color{BrickRed}\ding{55}}}%

\pagestyle{plain}

\def\BibTeX{{\rm B\kern-.05em{\sc i\kern-.025em b}\kern-.08em
    T\kern-.1667em\lower.7ex\hbox{E}\kern-.125emX}}

\begin{document}

\title{When the Server Steps In: Calibrated Updates for Fair Federated Learning}

\author{
\IEEEauthorblockN{
Tianrun Yu$^\dagger$\textsuperscript{*},
Kaixiang Zhao$^\P$\textsuperscript{*},
Cheng Zhang$^\ddagger$,
Anjun Gao$^\S$,
Yueyang Quan$^{\|}$,
Zhuqing Liu$^\|$,
Minghong Fang$^\S$
}
\IEEEauthorblockA{
$^\dagger$The Pennsylvania State University,
$^\P$University of Notre Dame, 
$^\ddagger$North Carolina State University \\
$^\|$University of North Texas,
$^\S$University of Louisville
}
}

\maketitle

\makeatletter
\renewcommand{\@makefntext}[1]{\noindent#1}
\makeatother
\footnotetext{\textsuperscript{*}Equal contribution. Tianrun Yu, Kaixiang Zhao, and Cheng Zhang conducted this research while they were interns under the supervision of Zhuqing Liu and Minghong Fang.}

\begin{abstract}

Federated learning (FL) has emerged as a transformative distributed learning paradigm, enabling multiple clients to collaboratively train a global model under the coordination of a central server without sharing their raw training data. While FL offers notable advantages, it faces critical challenges in ensuring fairness across diverse demographic groups. To address these fairness concerns, various fairness-aware debiasing methods have been proposed. However, many of these approaches either require modifications to clients’ training protocols or lack flexibility in their aggregation strategies. In this work, we address these limitations by introducing \alg, a novel server-side debiasing method designed to mitigate bias in FL systems. \alg operates by allowing the server to generate a single calibrated update after receiving model updates from the clients. This calibrated update is then integrated with the aggregated client updates to produce an adjusted global model that reduces bias. Theoretically, we establish that \alg converges to the optimal global model achieved by FedAvg and effectively reduces fairness loss over training rounds. Empirically, we demonstrate that \alg significantly mitigates bias within the system, showcasing its practical effectiveness.

\end{abstract}

\begin{IEEEkeywords}
Federated learning, Fairness, Server-side Debiasing
\end{IEEEkeywords}


\section{Introduction} \label{sec:intro}

Federated learning (FL)~\cite{McMahan17} is a distributed paradigm where multiple clients collaboratively train a global model under a central server without sharing raw data, thereby respecting privacy. However, despite these advantages, FL faces growing concerns regarding fairness~\cite{chang2023bias,chen2023privacy,guo2023fedbr,li2021ditto,li2019fair}.
Due to the decentralized nature of FL and the heterogeneity of client data, the resulting model may favor data-rich or majority groups while underperforming on underrepresented populations. For example, in a collaborative FL setting among banks training a loan approval model, each bank may serve distinct demographics, and the shared model could yield uneven accuracy across subgroups, disadvantaging certain communities. 
This imbalance raises fairness concerns, motivating algorithms that ensure equitable performance across diverse clients.

To date, several existing works have explored debiasing methods to address fairness risks in FL, either from the client or server side~\cite{ezzeldin2023fairfed,fan2022improving,he2020geometric,kamiran2012data,roh2020fairbatch,zeng2021improving}. Client-side approaches typically modify local training by reweighting data samples or incorporating fairness-aware regularization terms, while server-side methods adjust the global model using aggregated statistics of sensitive attributes like race or gender. Despite their promise, these methods face several limitations: they often require access to or modification of local training procedures, are narrowly tailored to specific fairness metrics, depend on FedAvg for aggregation, and lack theoretical guarantees for convergence or fairness improvement, limiting their practical applicability and generality.

 \noindent
\textbf{Our contribution: }%
To address this gap, we propose \alg, a novel server-side debiasing method for FL that is both fairness-metric agnostic and compatible with arbitrary aggregation rules. This flexibility enables seamless integration into a wide range of FL settings without modifying client procedures or restricting aggregation strategies. 
A unique feature of \alg is that it allows the server to maintain its own dataset, which it uses to proactively generate a \emph{single} calibrated update that enhances fairness in the overall system.
Ideally, the server would have a small, reliable dataset~\cite{cao2020fltrust,park2021sageflow,wang2022flare,xie2019zeno}, assuming its data shares a common distribution with that of the clients.
However, this assumption is often unrealistic in FL, as client data remains decentralized and inaccessible, leaving the server with limited insight into client distributions and hindering alignment with its own dataset.
To overcome this challenge, \alg stores early-round model checkpoints at the server and uses them to synthesize a dataset that approximates the training dynamics observed across clients.
This synthetic dataset enables the server to construct a calibrated update, which is then merged with the aggregated client updates to produce a fairer global model.

 \noindent
\textbf{Theory:}
Our first major theoretical result establishes that, under certain mild assumptions, the final global model learned by \alg will converge to the optimal global model achieved by FedAvg~\cite{McMahan17}. This convergence signifies that our proposed \alg maintains the overall accuracy of the model without compromising its performance, even with additional fairness-driven adjustments. 
Our second theoretical result rigorously proves that \alg effectively enhances fairness within the FL system. Specifically, we show that, in any given training round, the fairness loss produced by \alg is consistently lower than that produced by the standard FedAvg. This indicates that our method not only achieves comparable accuracy but also actively reduces bias.

 \noindent
\textbf{Evaluation: }%
We conduct an extensive empirical evaluation of our \alg using six datasets spanning diverse domains, seven debiasing strategies, and five aggregation rules. The results demonstrate that our approach significantly enhances fairness in FL systems. 
In addition to enhancing fairness, our proposed \alg excels in preserving the utility of the final global model, such as maintaining high accuracy, demonstrating its ability to strike a balance between fairness and performance.

Our primary contributions are summarized as follows:

\begin{list}{\labelitemi}{\leftmargin=1em \itemindent=-0.08em \itemsep=.2em}
    \item 
    We propose \alg, a novel server-side debiasing method for FL that operates independently of fairness metrics and aggregation rules, ensuring its adaptability and applicability across a wide range of FL scenarios.

    \item 
We provide theoretical guarantees showing that \alg converges to the optimal global model of FedAvg while consistently reducing fairness loss during training, ensuring both accuracy and improved fairness.

    \item 
    Extensive evaluations on multiple FL benchmarks confirm that \alg outperforms state-of-the-art fairness-aware methods in terms of both fairness improvement and accuracy retention across diverse practical settings.

\end{list}


\section{Preliminaries and Related Work} \label{sec:related}

\subsection{Federated Learning (FL)}

We consider a FL system with a central server and $n$ clients, where each client $i$ holds a private training data $\mathcal{D}_i$, and the overall dataset is $\mathcal{D} = \cup_{i=1}^n \mathcal{D}_i$. The goal is to collaboratively learn a global model $\mathbf{w} \in \mathbb{R}^d$ by minimizing the objective $\mathcal{L}(\mathbf{w}) = \sum_{i=1}^{n} \alpha_i \mathcal{L}_i(\mathbf{w}, \mathcal{D}_i)$, where $\alpha_i = \frac{|\mathcal{D}_i|}{|\mathcal{D}|}$ and $\mathcal{L}_i$ is the local loss of client $i$. In each training round, the server first broadcasts the current global model $\mathbf{w}^t$ to all or a subset of clients. Each participating client then samples a mini-batch $\mathcal{B}_i^t \subset \mathcal{D}_i$ and computes a local update $\mathbf{g}_i^t = \frac{1}{|\mathcal{B}_i^t|} \sum_{z \in \mathcal{B}_i^t} \nabla \mathcal{L}_i(\mathbf{w}, z)$, which is sent back to the server. The server aggregates the collected updates using an aggregation rule $\mathsf{GAR}(\cdot)$ and updates the global model as $\mathbf{w}^{t+1} = \mathbf{w}^t - \eta_t \cdot \mathsf{GAR}(\mathbf{g}^t_1, \dots, \mathbf{g}^t_n)$, where $\eta_t$ is the learning rate. For example, FedAvg~\cite{McMahan17} uses a weighted average: $\mathsf{GAR} = \sum_{i=1}^n \frac{|\mathcal{D}_i|}{|\mathcal{D}|} \mathbf{g}^t_i$.

\subsection{Fairness Metrics}
\label{fairness_metrics}

Fairness~\cite{dwork2012fairness,fang2022fairroad} is typically measured in terms of group fairness and individual fairness.
Group fairness ensures similar treatment across demographic groups, while individual fairness requires similar outcomes for similar individuals. This paper evaluates four metrics in FL: equalized odds~\cite{hardt2016equality}, demographic parity~\cite{dwork2012fairness}, and calibration~\cite{pleiss2017fairness} for group fairness, and consistency~\cite{zemel2013learning} for individual fairness. 
Each metric yields a bias score, where lower values reflect greater fairness.

\myparatight{1)~Equalized odds~\cite{hardt2016equality}}%
To define equalized odds, assume each data point has a sensitive attribute $A$ (e.g., race or gender), dividing the data into groups $G$, and a true label $Y \in \{0,1\}$, with $\hat{Y}(\mathbf{w})$ as the model’s prediction. The metric $\mathcal{M}_{\text{EO}}(\mathbf{w}, \mathcal{D})$ measures the maximum difference in prediction rates between any two groups with the same true label:
\begin{align}
\label{eo_equ}
  \mathcal{M}_{\text{EO}}(\mathbf{w},\mathcal{D})=&  \max_{y \in Y, h, k \in G} |\underset{\mathcal{D}}{\mathbb{P}}(\hat{Y}(\mathbf{w}) = 1 | A = h, Y = y) 
  \notag\\&- \underset{\mathcal{D}}{\mathbb{P}}(\hat{Y}(\mathbf{w}) = 1 | A = k, Y = y)|,
\end{align}
where $\mathbb{P}_{\mathcal{D}}(\hat{Y} = 1 \mid A = h, Y = y)$ is the probability the model predicts label 1 for group $h$ given true label $y$, evaluated over dataset $\mathcal{D}$.

\myparatight{2)~Demographic parity~\cite{dwork2012fairness}}%
Demographic parity assesses whether a model gives equal positive prediction rates across groups defined by a sensitive attribute $A$, aiming to prevent systematic favoritism.
The metric $\mathcal{M}_{\text{DP}}(\mathbf{w}, \mathcal{D})$ is defined as:
\begin{align}
\mathcal{M}_{\text{DP}}(\mathbf{w},\mathcal{D}) = & \max_{h, k \in G} |\underset{\mathcal{D}}{\mathbb{P}}(\hat{Y}(\mathbf{w}) = 1 | A = h)  
 \notag\\& - \underset{\mathcal{D}}{\mathbb{P}}(\hat{Y}(\mathbf{w}) = 1 | A = k)|,
\end{align}
where $\mathbb{P}_{\mathcal{D}}(\hat{Y} = 1 \mid A = h)$ is the probability the model assigns a positive label to group $h$ in $\mathcal{D}$.

\myparatight{3)~Calibration~\cite{pleiss2017fairness}}%
The calibration metric measures how well predicted probabilities align with actual outcomes across groups defined by a sensitive attribute $A$. A lower value indicates that positive predictions are equally reliable across all groups. It is defined as:
\begin{align}
\mathcal{M}_{\text{CAL}}(\mathbf{w},\mathcal{D}) = & \max_{h \in G}|\underset{\mathcal{D}}{\mathbb{P}}(Y=1 | \hat{Y}(\mathbf{w}) = 1 ,A=h)
      \notag\\&  -  \underset{\mathcal{D}}{\mathbb{P}}(Y=1 | \hat{Y}(\mathbf{w}) = 1 )|,
\end{align}
where $\mathbb{P}_{\mathcal{D}}(Y = 1 \mid \hat{Y} = 1, A = h)$ is the probability of a true positive in group $h$, and $\mathbb{P}_{\mathcal{D}}(Y = 1 \mid \hat{Y} = 1)$ is the overall true positive rate. A value of zero indicates perfect calibration.

\myparatight{4)~Consistency~\cite{zemel2013learning}}%
Consistency is an individual fairness metric that assesses whether a model gives similar predictions to similar inputs. It is defined as:
\begin{align}
\mathcal{M}_{\text{CON}}(\mathbf{w},\mathcal{D}) =  \frac{1}{|\mathcal{D}|} \sum_{z \in \mathcal{D}} |\hat{Y}_z(\mathbf{w}) - \frac{1}{|\mathcal{N}_z|} \sum_{q \in \mathcal{N}_z} \hat{Y}_q(\mathbf{w})|,
\end{align}
where $\hat{Y}_z(\mathbf{w})$ is the prediction for sample $z$, and $\mathcal{N}_z$ is its nearest neighbors.

\begin{table}
    \footnotesize
	\centering
    	\caption{Comparison of debiasing methods. ``Aggregation rule independent'' means the method works with any aggregation rule. ``Fairness metric agnostic'' indicates compatibility with various fairness definitions. ``Theoretical guarantee'' means the method is backed by theoretical analysis.
	}
	\label{tab:comparison}
	\begin{tabular}{|c|c|c|c|}
		\hline
		Method & \multicolumn{1}{c|}{\makecell{Aggregation rule\\independent}}  
		& \multicolumn{1}{c|}{\makecell{Fairness metric\\agnostic}}      
		& \multicolumn{1}{c|}{\makecell{Theoretical\\guarantee}}  \\
		\hline
		FLinear~\cite{he2020geometric} & \xmark & \xmark & \xmark \\
		\hline
		FairFed~\cite{ezzeldin2023fairfed}  & \xmark & \xmark & \xmark \\
		\hline
		FedFB~\cite{zeng2021improving}  & \xmark & \cmark  & \cmark\\
		\hline
		Reweight~\cite{kamiran2012data} & \xmark & \xmark & \xmark \\
		\hline
		\alg  & \cmark & \cmark  & \cmark \\
		\hline
	\end{tabular}
\vspace{-.07in}
\end{table}

\subsection{Bias Mitigation in FL}

Fairness in FL has attracted growing interest in recent years~\cite{chen2023privacy,guo2023fedbr,li2021ditto,li2019fair,mohri2019agnostic,shi2023towards,xu2023bias,zhang2021unified,zhang2024eliminating}, leading to various methods aimed at promoting fair model outcomes in this decentralized setting~\cite{ezzeldin2023fairfed,zeng2021improving,kamiran2012data,he2020geometric,roh2020fairbatch}. Debiasing can be applied locally at the client level, where updates are adjusted before being sent to the server, or globally at the server, which leverages client-side statistics to refine the global model. However, many of these approaches face key limitations. Some, like FLinear~\cite{he2020geometric} and FairFed~\cite{ezzeldin2023fairfed}, are tailored to specific fairness metrics, limiting their adaptability. Others assume the use of simple aggregation rules such as FedAvg, reducing their effectiveness under more general settings. Additionally, most lack theoretical guarantees and rely solely on empirical validation. Table~\ref{tab:comparison} highlights how our method \alg addresses these limitations to enhance fairness more broadly in FL systems.


\section{Our Method} 

\subsection{Overview}

We propose \alg, a server-side debiasing method for FL that is both effective and efficient. It reduces bias across diverse client distributions without compromising accuracy or introducing significant overhead. \alg is compatible with various fairness metrics and aggregation rules, and does not require additional client information beyond what is used in FedAvg. 
The server leverages early global models to create a synthetic dataset that reflects client training behavior. Using this dataset, it generates a calibrated update that is combined with incoming client updates, resulting in a global model with improved fairness.
Remark that this work focuses on fairness in a non-adversarial FL setting with honest clients and clean data, excluding attacks or poisoned updates~\cite{fang2020local,shejwalkar2022back,fang2022aflguard,fang2024byzantine,fang2025we,wang2025poisoning,fang2025byzantine,zhang2024poisoning,fang2025provably}.

\subsection{Generation of Synthetic Data}
\label{section_Synthetic_data}

\alg relies on generating a synthetic dataset on the server to produce a calibrated update that improves fairness. Rather than assuming access to a representative server-side dataset, as done in prior work~\cite{cao2020fltrust,wang2022flare,xie2019zeno}, or requiring clients to share their private data, we leverage recent advances in dataset condensation~\cite{cazenavette2022dataset,kim2022dataset} to construct a synthetic dataset that approximates the learning dynamics of the clients’ data.

Specifically, assume that the server saves the global models from the first $s$ rounds, which we represent as $\{\mathbf{w}^1, \mathbf{w}^2, \ldots, \mathbf{w}^s\}$. In FL, these global model checkpoints capture cumulative knowledge gained from training over multiple rounds across distributed clients. The primary goal for the server is to leverage these collected global model checkpoints $\{\mathbf{w}^1, \mathbf{w}^2, \ldots, \mathbf{w}^s\}$ to construct a synthetic dataset.
This synthetic dataset, denoted as $\mathcal{D}_{\text{syn}} = \{\mathbf{X}_{\text{syn}}, \mathbf{Y}_{\text{syn}}\}$, comprises synthetic inputs $\mathbf{X}_{\text{syn}}$ and their corresponding labels $\mathbf{Y}_{\text{syn}}$.
This synthetic dataset should enable the neural network $f$, when trained on $\mathcal{D}_{\text{syn}}$, to achieve a performance comparable to training on the clients' overall training dataset $\mathcal{D}$, which aggregates data from all clients. To achieve this, it is crucial that $\mathcal{D}_{\text{syn}}$ preserves the statistical properties and essential knowledge from the FL training process.
To achieve this, we start by randomly selecting two model check-points from the trajectory: $\mathbf{w}^{\tau}$ and $\mathbf{w}^{\tau+\vartheta}$, i.e., $\mathbf{w}^{\tau}, \mathbf{w}^{\tau+\vartheta} \in \{\mathbf{w}^1, \mathbf{w}^2, \ldots, \mathbf{w}^s\}$, where $1 \le \tau < s$, and $\vartheta>0$.
The idea is to use $\mathcal{D}_{\text{syn}}$ to train the model from checkpoint $\mathbf{w}^{\tau}$ for $\vartheta$ steps, resulting in a model state that closely matches $\mathbf{w}^{\tau+\vartheta}$. In essence, the synthetic dataset should replicate the learning dynamics that would have been produced if training were performed on $\mathcal{D}$, allowing the model to transition smoothly between these states.

This synthetic data generation objective can be formulated as an optimization problem, where the aim is to minimize the difference between the model trained on the synthetic data and the target model checkpoint $\mathbf{w}^{\tau+\vartheta}$:
\begin{equation}
\begin{split}
\min_{\mathbf{X}_{\text{syn}}, \mathbf{Y}_{\text{syn}}} \Pi(\mathbf{X}_{\text{syn}}, \mathbf{Y}_{\text{syn}}) = || \overrightarrow{\mathbf{w}} -\mathbf{w}^{\tau+\vartheta}||^2,
\\
\text{s.t.} \quad \overrightarrow{\mathbf{w}}= f(\mathbf{X}_{\text{syn}}, \mathbf{Y}_{\text{syn}}, \mathbf{w}^{\tau}, \vartheta),
\end{split}
\end{equation}
where $f(\mathbf{X}_{\text{syn}}, \mathbf{Y}_{\text{syn}}, \mathbf{w}^{\tau}, \vartheta)$ denotes the updated model parameters, represented as $\overrightarrow{\mathbf{w}}$, obtained by training the neural network $f$ on the current synthetic dataset for $\vartheta$ iterations, beginning with the model $\mathbf{w}^{\tau}$. 
The objective is to determine the synthetic features $\mathbf{X}_{\text{syn}}$ and labels $\mathbf{Y}_{\text{syn}}$ such that the resulting model $\overrightarrow{\mathbf{w}}$ is as close as possible to the target $\mathbf{w}^{\tau+\vartheta}$.

To solve the above optimization problem, we use an iterative gradient descent approach, as shown in Algorithm~\ref{syn_data_gen} in Appendix. 
During each iteration, two checkpoints, $\mathbf{w}^{\tau}$ and $\mathbf{w}^{\tau+\vartheta}$, are randomly selected from the set of collected checkpoints $\{\mathbf{w}^1, \mathbf{w}^2, \ldots, \mathbf{w}^s\}$. The neural network $f$ is then trained for $\vartheta$ steps on the current synthetic dataset, starting from the checkpoint $\mathbf{w}^{\tau}$. Next, the resulting model state $\overrightarrow{\mathbf{w}}$ is evaluated against the target checkpoint $\mathbf{w}^{\tau+\vartheta}$. We calculate the gradient of the loss function $\Pi(\mathbf{X}_{\text{syn}}, \mathbf{Y}_{\text{syn}})$ with respect to both $\mathbf{X}_{\text{syn}}$ and $\mathbf{Y}_{\text{syn}}$, denoted as $\nabla_{\mathbf{X}_{\text{syn}}} \Pi(\mathbf{X}_{\text{syn}}, \mathbf{Y}_{\text{syn}})$ and $\nabla_{\mathbf{Y}_{\text{syn}}} \Pi(\mathbf{X}_{\text{syn}}, \mathbf{Y}_{\text{syn}})$. These gradients are then used to update the synthetic dataset through gradient descent, refer to Line~\ref{syn_data_update} in Algorithm~\ref{syn_data_gen}.
By iteratively optimizing this process, we generate a synthetic dataset that captures the essential learning trajectory of the global model. 
This dataset acts as a highly effective substitute for the clients’ collective training data, allowing the server to avoid requesting clients to share their local training data or relying on unrealistic assumptions, such as the server possessing a dataset that mirrors the distribution of all client data accurately.
It is important to note that sensitive attributes are included within the sample features, so there is no need to generate them separately for the synthetic dataset.

\subsection{Generation of Calibrated Update}

With the synthetic dataset $\mathcal{D}_{\text{syn}}$ available, the server generates a calibrated update $\mathbf{g}_0^t$ in each round $t$ to mitigate bias. After collecting client updates $\mathbf{g}_1^t, \dots, \mathbf{g}_n^t$, the server adjusts the global model as:
\begin{align}
\label{compute_w_t_1}
\mathbf{w}^{t+1} = \mathbf{w}^t - \eta_t \cdot (\gamma_t \cdot \mathbf{g}_0^t + \mathsf{GAR}(\mathbf{g}_1^t, \dots, \mathbf{g}_n^t)),
\end{align}
where $\mathsf{GAR}(\cdot)$ is the aggregation rule and $\gamma_t > 0$ balances the calibrated update.
Since the server cannot access clients' local data, we propose learning $\mathbf{g}_0^t$ by optimizing a fairness metric $\mathcal{M}$ (e.g., equalized odds or demographic parity) over the synthetic dataset $\mathcal{D}_{\text{syn}}$. 
The calibrated update generation (CUG) problem is formulated as:
\begin{align}
\label{CUG_problem_upper}
\!\!\!\!& \min_{\mathbf{g}^t_{0}} \mathcal{F}(\mathbf{w}^{t+1},\mathcal{D}_{\text{syn}}) = \mathcal{M}( \mathbf{w}^{t+1},\mathcal{D}_{\text{syn}}), \quad
 \notag\\& \text{s.t. } \mathbf{w}^{t+1} = \mathbf{w}^t - \eta_t (\gamma_t \cdot \mathbf{g}_0^t + \mathsf{GAR}(\mathbf{g}_1^t, \dots, \mathbf{g}_n^t))).
\end{align}
Problem CUG offers a flexible framework applicable to various fairness metrics. For example, to enforce equalized odds, we instantiate $\mathcal{M}$ with $\mathcal{M}_{\text{EO}}$, as defined in Eq.~(\ref{eo_equ}).
However,
optimizing 
Problem CUG is challenging due to the non-differentiability of fairness metrics (e.g., threshold-based classification) and potentially complex $\mathsf{GAR}(\cdot)$. 
In the following, we demonstrate how to address these challenges
using the equalized odds fairness metric as an example.

Equalized odds measures whether a model maintains balanced accuracy across groups by evaluating the maximum gap in true and false positive rates. Following insights from~\cite{shen2022optimising}, minimizing the loss difference for positive predictions between groups approximates this goal. Thus, we reformulate Eq.~(\ref{CUG_problem_upper}) for the equalized odds metric as:
\begin{align}
\label{CUG_problem_upper_tran}
      \min_{\mathbf{g}^t_{0}} \mathcal{F}_{\text{EO}}(\mathbf{w}^{t+1},\mathcal{D}_{\text{syn}})  \!=&\! \sum_{y \in Y} \sum_{h, k \in G } \bigg| \frac{1}{|\mathcal{D}_{\text{syn}}^{h,y}|}  
    \sum_{z \in \mathcal{D}_{\text{syn}}^{h,y}}  l(\mathbf{w}^{t+1}, z) 
     \notag\\& - 
    \frac{1}{|\mathcal{D}_{\text{syn}}^{k,y}|} \sum_{q \in \mathcal{D}_{\text{syn}}^{k,y}} \! l(\mathbf{w}^{t+1}, q) \bigg|,
\end{align}
where $\mathcal{D}_{\text{syn}}^{h,y}$ denotes the subset of synthetic data with group $h$ and label $y$. The loss $l(\mathbf{w}, z)$ measures the discrepancy between prediction and true label. For cross-entropy loss, $l(\mathbf{w}, z) = -y_z \log p(\mathbf{w}, z) - (1 - y_z) \log(1 - p(\mathbf{w}, z))$, where $p(\mathbf{w}, z) = \frac{1}{1 + e^{-\mathbf{w}^\top \mathbf{x}_z}}$ is the predicted probability for the positive class.
Although we reformulate Eq.~(\ref{CUG_problem_upper}) as Eq.~(\ref{CUG_problem_upper_tran}), computing the gradient of $\mathcal{F}_{\text{EO}}(\mathbf{w}^{t+1}, \mathcal{D}_{\text{syn}})$ to derive the calibrated update $\mathbf{g}^t_{0}$ remains difficult, as $\mathbf{w}^{t+1}$ depends on the aggregation rule $\mathsf{GAR}(\cdot)$, which is generally non-differentiable.
In what follows, we detail our approach to solve Eq.~(\ref{CUG_problem_upper_tran}) and efficiently compute \( \mathbf{g}^t_{0} \).

We denote $\overline{\mathbf{g}}^t = \gamma_t \cdot \mathbf{g}_0^t + \mathsf{GAR}(\mathbf{g}^t_{1}, \mathbf{g}^t_{2}, \cdots, \mathbf{g}^t_{n})$.
We approximate the left-hand side of Eq.~(\ref{CUG_problem_upper_tran}) as:
\begin{align}
 &\min_{\mathbf{g}^t_{0}} \mathcal{F}_{\text{EO}}(\mathbf{w}^{t+1},\mathcal{D}_{\text{syn}})   \stackrel{(a)}  = 
 \min_{\mathbf{g}^t_{0}} \mathcal{F}_{\text{EO}}(\mathbf{w}^t - \eta_t \cdot \overline{\mathbf{g}}^t,  \mathcal{D}_{\text{syn}})  \nonumber
 \\
&  \stackrel{(b)}\approx
 \min_{\mathbf{g}^t_{0}} \,  \mathcal{F}_{\text{EO}}(\mathbf{w}^t, \mathcal{D}_{\text{syn}}) 
  - \eta_t \cdot \nabla_{\mathbf{w}^t} \mathcal{F}_{\text{EO}}(\mathbf{w}^t, \mathcal{D}_{\text{syn}})^\top \overline{\mathbf{g}}^t,
  \label{appro_second}
\end{align}
where $(a)$ results from substituting $\mathbf{w}^{t+1} = \mathbf{w}^t - \eta_t \overline{\mathbf{g}}^t$, and \((b)\) is derived using the Taylor expansion that $
\mathcal{F}_{\text{EO}}(\mathbf{w}^t - \eta_t \cdot \overline{\mathbf{g}}^t, \mathcal{D}_{\text{syn}}) 
\approx \mathcal{F}_{\text{EO}}(\mathbf{w}^t, \mathcal{D}_{\text{syn}}) - \eta_t \cdot \nabla_{\mathbf{w}^t} \mathcal{F}_{\text{EO}}(\mathbf{w}^t, \mathcal{D}_{\text{syn}})^\top \overline{\mathbf{g}}^t + \mathcal{O}(\eta_t^2 \|\overline{\mathbf{g}}^t\|^2)
$, omitting higher-order terms.
In Eq.~(\ref{appro_second}), $\mathbf{w}^t$ and $\eta_t$ are fixed at round $t$, making $\mathcal{F}_{\text{EO}}(\mathbf{w}^t, \mathcal{D}_{\text{syn}})$ and $\eta_t$ constant. 
Thus, we can omit them and reformulate Eq.~(\ref{appro_second}) as an equivalent maximization problem:
\begin{align}
\label{max_opt}
 \max_{\mathbf{g}^t_{0}} \nabla_{\mathbf{w}^t}
\mathcal{F}_{\text{EO}}(\mathbf{w}^t, \mathcal{D}_{\text{syn}} )^\top \overline{\mathbf{g}}^t.
\end{align}
By definition of $\overline{\mathbf{g}}^t$, we have
$
\nabla_{\mathbf{w}^t} \mathcal{F}_{\text{EO}}(\mathbf{w}^t, \mathcal{D}_{\text{syn}} )^\top \overline{\mathbf{g}}^t = \nabla_{\mathbf{w}^t} \mathcal{F}_{\text{EO}}(\mathbf{w}^t, \mathcal{D}_{\text{syn}} )^\top (\gamma_t \cdot \mathbf{g}^t_{0} + \mathsf{GAR}(\mathbf{g}^t_{1}, \mathbf{g}^t_{2}, \cdots, \mathbf{g}^t_{n})).
$
Treating $\gamma_t$ and $\mathsf{GAR}(\mathbf{g}^t_{1}, \mathbf{g}^t_{2}, \cdots, \mathbf{g}^t_{n})$ as constants, Eq.~(\ref{max_opt}) reduces to:
\begin{align}
\label{max_opt_tran}
 \max_{\mathbf{g}^t_{0}} \nabla_{\mathbf{w}^t}
\mathcal{F}_{\text{EO}}(\mathbf{w}^t, \mathcal{D}_{\text{syn}} )^\top \mathbf{g}^t_{0}.
\end{align}
To maximize \(\nabla_{\mathbf{w}^t} \mathcal{F}_{\text{EO}}(\mathbf{w}^t, \mathcal{D}_{\text{syn}} )^\top \mathbf{g}^t_{0}\), we should align \(\mathbf{g}^t_{0}\) in the same direction as \(\nabla_{\mathbf{w}^t} \mathcal{F}_{\text{EO}}(\mathbf{w}^t, \mathcal{D}_{\text{syn}})\) to maximize their dot product. 
To further simplify our approach, we set $\mathbf{g}_0^t$ equal in magnitude to $\nabla_{\mathbf{w}^t} \mathcal{F}_{\text{EO}}(\mathbf{w}^t, \mathcal{D}_{\text{syn}})$, yielding the optimal choice:
\begin{align}
\label{compute_g_0}
\mathbf{g}^t_{0} =  \nabla_{\mathbf{w}^t} \mathcal{F}_{\text{EO}}(\mathbf{w}^t, \mathcal{D}_{\text{syn}}).
\end{align}
We can use autograd in PyTorch~\cite{paszke2019pytorch} or TensorFlow~\cite{abadi2016tensorflow} to compute $\nabla_{\mathbf{w}^t} \mathcal{F}_{\text{EO}}(\mathbf{w}^t, \mathcal{D}_{\text{syn}})$. The server then adds the resulting calibrated update $\mathbf{g}^t_{0}$ to the aggregated client updates to reduce bias in the global model.
This process applies to equalized odds; similar formulations for three other fairness metrics are provided in Appendix~\ref{sec:appendix_1}.
Algorithm~\ref{our_alg} in Appendix summarizes our method. In the first $s$ rounds, the server collects global models without calibration. Once $\mathcal{S} = \{\mathbf{w}^1, \ldots, \mathbf{w}^s\}$ is collected, it constructs a synthetic dataset (Lines~\ref{Construct_one}-\ref{Construct_two}). Calibrated updates are generated in all subsequent rounds, using the same synthetic dataset built at round $s+1$.


\section{Theoretical Performance Analysis} 
\label{sec:theoretical}

This section presents the theoretical guarantees of our method. Recall that the practical procedure runs for an initial $(t=S)$ rounds in which the server aggregates the
plain client gradients; at round $(t=S\!+\!1)$ it synthesises a proxy dataset and thereafter augments every update
with a fairness–corrective gradient computed on this synthetic set. 
To simplify the analysis, we follow a common assumption that the server holds a separate dataset prior to training. This dataset need not follow the same distribution as the clients' training data and may be out-of-distribution, as long as it satisfies Assumption~\ref{assumption_4}.
Let $\mathbf{w}^*$ be the minimizer of the global loss $\mathcal{L}$, with optimal value $\mathcal{L}^* = \mathcal{L}(\mathbf{w}^*)$. Define $\mathcal{F}$ and $\mathcal{F}_{\text{syn}}$ as the fairness losses on the clients’ training data and synthetic data, respectively. Let $\mathcal{L}_i^*$ denote the optimal loss for client $i$, and $\mathcal{F}_{\text{syn}}^*$ be the minimal fairness loss over the synthetic dataset. We define two key heterogeneity terms: $\Gamma_1 = \mathcal{L}^* - \sum_{i=1}^n \alpha_i \mathcal{L}_i^*$, and $\Gamma_2 = \mathcal{F}_{\text{syn}}(\mathbf{w}^*) - \mathcal{F}_{\text{syn}}^*$. Let $T$ be the number of rounds where the calibrated update is applied (not the total training rounds), and define $\theta = \|\mathbf{w}^1 - \mathbf{w}^*\|^2$, where $\mathbf{w}^1$ is the global model after the first calibrated update. Before stating our theoretical results, we outline the standard assumptions adopted in prior work~\cite{ChenPOMACS17,chu2022securing,karimireddy2020byzantine,li2019convergence,yin2018byzantine}.

\begin{assumption}
\label{assumption_1}
The loss functions are \(\mu\)-strongly convex and \(\rho\)-smooth.  
See Appendix~\ref{app_assumption_1} for details.
\end{assumption}

\begin{assumption}
\label{assumption_3}
The gradient of the global loss is bounded.
\begin{gather*}
\left\|\nabla \mathcal{L}(\mathbf{w}) \right\|^2 \leq R.
\end{gather*}
\end{assumption}

\begin{assumption}
\label{assumption_4}
The difference between the gradients of the synthetic fairness loss function \(\mathcal{F}_{\text{syn}}\) and the actual fairness loss function \(\mathcal{F}\) is bounded by a small constant \(\epsilon\).
\begin{gather*}
\| \nabla \mathcal{F}_{\text{syn}}(\mathbf{w}) - \nabla \mathcal{F}(\mathbf{w}) \| < \epsilon.
\end{gather*}
\end{assumption}

\begin{theorem}
Assume that Assumptions~\ref{assumption_1}-\ref{assumption_3} hold, with \(\rho\), \(\mu\), \(\nu\), and \(\theta\) defined accordingly. Suppose the server combines clients' model updates using the FedAvg rule. Set the learning rate as \(\eta_t = \frac{\varpi}{t + \varsigma}\) and \(\gamma_t = \frac{1}{t + \varsigma}\), where \(\varsigma\) and \(\varpi\) are constants and \(\varpi > \frac{1}{\mu}\). Under these conditions, our proposed \alg guarantees the following for any fairness metric:
    \begin{align}
        \mathcal{L}(\mathbf{w}^{T}) - \mathcal{L}^{*} \leq \frac{\rho}{2}\frac{ \nu}{T+\varsigma}, \nonumber
    \end{align}
    where \( \nu = \max\{\mathcal{Z}_1,\mathcal{Z}_2 \}\) with \(\mathcal{Z}_1=\theta(\varsigma+1)\) and \(\mathcal{Z}_2=\frac{4\rho\Gamma_1\varpi^{2}+2\Gamma_2\varpi}{\mu \varpi-1} \).\label{theorem1}
 
\end{theorem}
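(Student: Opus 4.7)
The plan is to adapt the classical FedAvg convergence analysis of \cite{li2019convergence} by treating the calibration direction $\gamma_t \mathbf{g}_0^t$ as a controlled, vanishing perturbation of the ordinary descent step. The central quantity I would track is $\|\mathbf{w}^{t+1}-\mathbf{w}^*\|^2$, with the goal of proving inductively that $\mathbb{E}\|\mathbf{w}^t - \mathbf{w}^*\|^2 \leq \nu/(t+\varsigma)$; the stated bound then follows in one line from $\rho$-smoothness via $\mathcal{L}(\mathbf{w}^T) - \mathcal{L}^* \leq (\rho/2)\|\mathbf{w}^T - \mathbf{w}^*\|^2$.

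The first step would be the one-step expansion
\begin{equation*}
\|\mathbf{w}^{t+1}-\mathbf{w}^*\|^2 = \|\mathbf{w}^t - \mathbf{w}^*\|^2 - 2\eta_t\langle \bar{\mathbf{g}}^t + \gamma_t \mathbf{g}_0^t,\, \mathbf{w}^t - \mathbf{w}^*\rangle + \eta_t^2 \|\bar{\mathbf{g}}^t + \gamma_t \mathbf{g}_0^t\|^2,
\end{equation*}
where $\bar{\mathbf{g}}^t = \sum_i \alpha_i \mathbf{g}_i^t$ is the FedAvg aggregate. The inner product against $\bar{\mathbf{g}}^t$ is bounded below using $\mu$-strong convexity and $\rho$-smoothness of $\mathcal{L}_i$, yielding the usual contraction $(1 - \mu\eta_t)\|\mathbf{w}^t - \mathbf{w}^*\|^2$ plus a heterogeneity remainder of order $\eta_t^2 \rho \Gamma_1$. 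For the calibration term I would invoke convexity of $\mathcal{F}_{\text{syn}}$ to write $\langle \mathbf{g}_0^t, \mathbf{w}^t - \mathbf{w}^*\rangle \geq \mathcal{F}_{\text{syn}}(\mathbf{w}^t) - \mathcal{F}_{\text{syn}}(\mathbf{w}^*) \geq -\Gamma_2$, contributing at most an additive $2\eta_t\gamma_t \Gamma_2$. The quadratic term is split via $\|a+b\|^2 \leq 2\|a\|^2 + 2\|b\|^2$ into a FedAvg piece and a calibration piece, the latter controlled via Assumption~\ref{assumption_3}.

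Substituting $\eta_t = \varpi/(t+\varsigma)$ and $\gamma_t = 1/(t+\varsigma)$ makes every error term scale as $1/(t+\varsigma)^2$, matching the target rate. An induction on $t$ then closes the argument: the base case $t=1$ is absorbed by $\mathcal{Z}_1 = \theta(\varsigma+1)$, and the inductive step reduces to requiring $\nu \geq \mathcal{Z}_2 = (4\rho\Gamma_1\varpi^2 + 2\Gamma_2\varpi)/(\mu\varpi - 1)$, which is precisely the quantity that lets the residuals collapse back into $\nu/(t+1+\varsigma)$. The condition $\varpi > 1/\mu$ is exactly what keeps $\mathcal{Z}_2$ positive and finite.

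The main obstacle is the bookkeeping: I need to peel apart the $\Gamma_1$ and $\Gamma_2$ contributions carefully so that each pairs with the correct power of $1/(t+\varsigma)$, and verify that the constants line up to exactly $\mathcal{Z}_2$. A subtler conceptual point is that the calibration gradient is computed on synthetic rather than client data, so it cannot be expected to pull $\mathbf{w}^t$ toward the true optimum $\mathbf{w}^*$; the way around this is to absorb the misalignment entirely into $\Gamma_2$ and rely on $\gamma_t \to 0$ to make the perturbation vanish, thereby preserving the plain-FedAvg convergence rate without needing Assumption~\ref{assumption_4} here (which would become relevant only for the companion fairness-reduction theorem).
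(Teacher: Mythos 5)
Your overall architecture matches the paper's: a one-step recursion for $\Delta_t=\|\mathbf{w}^t-\mathbf{w}^*\|^2$, strong convexity for the cross terms, heterogeneity constants $\Gamma_1,\Gamma_2$, an induction giving $\Delta_t\le \nu/(t+\varsigma)$ with $\mathcal{Z}_1$ for the base case and $\mathcal{Z}_2$ for the inductive step, and a final appeal to $\rho$-smoothness. However, there is a genuine gap in how you treat the calibration terms. You bound the cross term by convexity as $\langle \mathbf{g}_0^t,\mathbf{w}^t-\mathbf{w}^*\rangle \ge \mathcal{F}_{\text{syn}}(\mathbf{w}^t)-\mathcal{F}_{\text{syn}}(\mathbf{w}^*)\ge -\Gamma_2$ and then propose to control the quadratic piece $2\eta_t^2\gamma_t^2\|\mathbf{g}_0^t\|^2$ ``via Assumption~\ref{assumption_3}.'' Assumption~\ref{assumption_3} bounds only $\|\nabla\mathcal{L}(\mathbf{w})\|^2\le R$, the gradient of the global \emph{accuracy} loss; it says nothing about $\|\mathbf{g}_0^t\|=\|\nabla\mathcal{F}_{\text{syn}}(\mathbf{w}^t)\|$, and no hypothesis of Theorem~\ref{theorem1} gives an upper bound on the fairness gradient (Theorem~\ref{theorem2}'s $\psi$ is a lower bound, and Assumption~\ref{assumption_4} only bounds a difference of gradients). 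So as written, the calibration quadratic term is uncontrolled, and even if you did posit some uniform gradient bound, the resulting extra term would change the constant and not reproduce the stated $\mathcal{Z}_2=\frac{4\rho\Gamma_1\varpi^2+2\Gamma_2\varpi}{\mu\varpi-1}$.

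The fix — and this is exactly what the paper does in its Lemma~\ref{lemma2} — is to not discard the function-value gap prematurely. Keep the strong-convexity bound in the form $-2\eta_t\gamma_t\langle\mathbf{g}_0^t,\mathbf{w}^t-\mathbf{w}^*\rangle\le -\mu\eta_t\gamma_t\Delta_t-2\eta_t\gamma_t\bigl(\mathcal{F}_{\text{syn}}(\mathbf{w}^t)-\mathcal{F}_{\text{syn}}(\mathbf{w}^*)\bigr)$, and bound the quadratic piece by $\rho$-smoothness (the paper's Lemma~\ref{lem:gradient_bound}): $\|\mathbf{g}_0^t\|^2\le 2\rho\bigl(\mathcal{F}_{\text{syn}}(\mathbf{w}^t)-\mathcal{F}_{\text{syn}}^*\bigr)$. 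Then the $\bigl(\mathcal{F}_{\text{syn}}(\mathbf{w}^t)-\mathcal{F}_{\text{syn}}^*\bigr)$ terms pair up with coefficient $4\rho\eta_t^2\gamma_t^2-2\eta_t\gamma_t\le 0$ under the (implicit) step-size condition $\eta_t\gamma_t\le\frac{1}{2\rho}$, and only the residual $2\eta_t\gamma_t\bigl(\mathcal{F}_{\text{syn}}(\mathbf{w}^*)-\mathcal{F}_{\text{syn}}^*\bigr)=2\eta_t\gamma_t\Gamma_2$ survives — which is the term your induction needs to collapse into $\mathcal{Z}_2$. (The analogous pairing of the client cross term with the FedAvg quadratic term, using $\eta_t<\frac{1}{2\rho}$, produces the $4\rho\Gamma_1\eta_t^2$ piece; your sketch of that half is consistent with the paper.) With that repair your induction goes through exactly as in the paper, including dropping the extra $\mu\eta_t\gamma_t$ contraction, which is indeed not needed.
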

\begin{proof}
The proof is relegated to Appendix~\ref{app_proof_theorem1}.
\end{proof}

\begin{theorem}
    Assume that Assumptions~\ref{assumption_1}-\ref{assumption_4} hold. Let the server use the FedAvg rule to combine clients' model updates. Suppose there exists a constant \( \psi > \epsilon \) such that \( \|\nabla \mathcal{F}_{\text{syn}}(\mathbf{w}^t)\| \geq \psi \). Set the learning rates as \( \eta_t = \frac{\varpi}{t + \varsigma} \) and \( \gamma_t = \frac{1}{t + \varsigma} \), where \(\varsigma\) and \(\varpi\) are constants satisfying \( \varpi > \frac{1}{\mu} \) and \( \varsigma > \max \left\{ \sqrt{ \frac{\rho \varpi}{2} }, \ \frac{\rho \varpi \sqrt{R} + \sqrt{ (\rho \varpi)^2 R + 2 (\psi - \epsilon) \psi \rho \varpi}}{2 (\psi - \epsilon)} \right\} \). Under these conditions, our proposed \alg ensures the following result for any fairness metric:
    \begin{align}
        \mathcal{F}(\mathbf{w}^{t+1}) < \mathcal{F}(\mathbf{v}^{t+1}),  \nonumber
    \end{align}
where \( \mathbf{v}^{t+1} = \mathbf{w}^t - \eta_t \cdot \mathsf{GAR}(\mathbf{g}^t_{1}, \mathbf{g}^t_{2}, \dots, \mathbf{g}^t_{n}) \), with \(\mathsf{GAR}(\cdot)\) implemented here using the FedAvg rule.
    \label{theorem2}
\end{theorem}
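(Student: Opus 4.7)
\medskip
\noindent\textbf{Proof plan for Theorem~\ref{theorem2}.}

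My approach is to treat this as a one-step descent lemma: since
$\mathbf{w}^{t+1}=\mathbf{v}^{t+1}-\eta_{t}\gamma_{t}\mathbf{g}_{0}^{t}$
with $\mathbf{g}_{0}^{t}=\nabla\mathcal{F}_{\mathrm{syn}}(\mathbf{w}^{t})$
(Eq.~(\ref{compute_g_0})), I would first apply $\rho$-smoothness of $\mathcal{F}$ (Assumption~\ref{assumption_1}) centred at $\mathbf{v}^{t+1}$ to get
\[
\mathcal{F}(\mathbf{w}^{t+1}) \;\le\; \mathcal{F}(\mathbf{v}^{t+1})
-\eta_{t}\gamma_{t}\,\nabla\mathcal{F}(\mathbf{v}^{t+1})^{\top}\nabla\mathcal{F}_{\mathrm{syn}}(\mathbf{w}^{t})
+\tfrac{\rho}{2}\eta_{t}^{2}\gamma_{t}^{2}\|\nabla\mathcal{F}_{\mathrm{syn}}(\mathbf{w}^{t})\|^{2}.
\]
The desired strict inequality $\mathcal{F}(\mathbf{w}^{t+1})<\mathcal{F}(\mathbf{v}^{t+1})$ therefore reduces to showing
\[
\nabla\mathcal{F}(\mathbf{v}^{t+1})^{\top}\nabla\mathcal{F}_{\mathrm{syn}}(\mathbf{w}^{t})
\;>\;\tfrac{\rho}{2}\eta_{t}\gamma_{t}\,\|\nabla\mathcal{F}_{\mathrm{syn}}(\mathbf{w}^{t})\|^{2}.
\]

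\medskip
\noindent\textbf{Controlling the inner product.}
I would write $\nabla\mathcal{F}(\mathbf{v}^{t+1})=\nabla\mathcal{F}(\mathbf{w}^{t})+\bigl[\nabla\mathcal{F}(\mathbf{v}^{t+1})-\nabla\mathcal{F}(\mathbf{w}^{t})\bigr]$ and decompose the inner product into two pieces. For the first piece, rewrite $\nabla\mathcal{F}(\mathbf{w}^{t})^{\top}\nabla\mathcal{F}_{\mathrm{syn}}(\mathbf{w}^{t})=\|\nabla\mathcal{F}_{\mathrm{syn}}(\mathbf{w}^{t})\|^{2}-\bigl(\nabla\mathcal{F}_{\mathrm{syn}}(\mathbf{w}^{t})-\nabla\mathcal{F}(\mathbf{w}^{t})\bigr)^{\top}\nabla\mathcal{F}_{\mathrm{syn}}(\mathbf{w}^{t})$ and use Cauchy--Schwarz together with Assumption~\ref{assumption_4} and the lower bound $\|\nabla\mathcal{F}_{\mathrm{syn}}(\mathbf{w}^{t})\|\ge\psi>\epsilon$ to conclude
$\nabla\mathcal{F}(\mathbf{w}^{t})^{\top}\nabla\mathcal{F}_{\mathrm{syn}}(\mathbf{w}^{t})\ge\psi(\psi-\epsilon)$ (using that $x(x-\epsilon)$ is increasing for $x\ge\psi>\epsilon$). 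For the second piece, $\rho$-smoothness gives $\|\nabla\mathcal{F}(\mathbf{v}^{t+1})-\nabla\mathcal{F}(\mathbf{w}^{t})\|\le\rho\|\mathbf{v}^{t+1}-\mathbf{w}^{t}\|=\rho\eta_{t}\|\mathsf{GAR}(\mathbf{g}_{1}^{t},\dots,\mathbf{g}_{n}^{t})\|$, and because $\mathsf{GAR}$ is FedAvg this equals $\nabla\mathcal{L}(\mathbf{w}^{t})$, so by Assumption~\ref{assumption_3} it is bounded by $\rho\eta_{t}\sqrt{R}$. Another application of Cauchy--Schwarz then shows this cross term is at least $-\rho\eta_{t}\sqrt{R}\,\|\nabla\mathcal{F}_{\mathrm{syn}}(\mathbf{w}^{t})\|$.

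\medskip
\noindent\textbf{Closing the argument with the step-size schedule.}
Dividing through by $\|\nabla\mathcal{F}_{\mathrm{syn}}(\mathbf{w}^{t})\|$ and using $\|\nabla\mathcal{F}_{\mathrm{syn}}(\mathbf{w}^{t})\|\ge\psi$, the sufficient condition becomes
\[
\psi\bigl(1-\tfrac{\rho}{2}\eta_{t}\gamma_{t}\bigr) \;>\; \epsilon+\rho\eta_{t}\sqrt{R}.
\]
The condition $\varsigma>\sqrt{\rho\varpi/2}$ ensures positivity of $1-\tfrac{\rho}{2}\eta_{t}\gamma_{t}$ for all $t\ge 0$, since $\tfrac{\rho}{2}\eta_{t}\gamma_{t}=\tfrac{\rho\varpi}{2(t+\varsigma)^{2}}$. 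Substituting $\eta_{t}=\varpi/(t+\varsigma)$ and $\gamma_{t}=1/(t+\varsigma)$, clearing denominators by $(t+\varsigma)^{2}$, and taking the worst case $t=0$ reduces the inequality to the quadratic
\[
(\psi-\epsilon)\varsigma^{2}-\rho\varpi\sqrt{R}\,\varsigma-\tfrac{\rho\psi\varpi}{2}\;>\;0,
\]
whose larger root is precisely $\bigl(\rho\varpi\sqrt{R}+\sqrt{(\rho\varpi)^{2}R+2(\psi-\epsilon)\psi\rho\varpi}\bigr)/\bigl(2(\psi-\epsilon)\bigr)$, matching the second bound imposed on $\varsigma$. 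This closes the proof for arbitrary $t$, since the $t=0$ bound is the tightest.

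\medskip
\noindent\textbf{Main obstacle.}
The delicate part is the two-level gradient mismatch: $\nabla\mathcal{F}(\mathbf{v}^{t+1})$ must be compared to $\nabla\mathcal{F}_{\mathrm{syn}}(\mathbf{w}^{t})$, but these differ through both a dataset discrepancy (Assumption~\ref{assumption_4}) and a point discrepancy (smoothness applied across the FedAvg step). Making sure the two sources of slack combine into the single closed-form quadratic condition on $\varsigma$—without introducing an unnecessary $\|\nabla\mathcal{F}_{\mathrm{syn}}(\mathbf{w}^{t})\|^{2}$ on the wrong side of the inequality—is the step that forces the precise assumption $\psi>\epsilon$ and the exact form of the $\varsigma$ lower bound. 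The rest is a direct smoothness-plus-Cauchy--Schwarz computation.
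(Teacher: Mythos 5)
Your proposal is correct and follows essentially the same route as the paper's proof: a one-step $\rho$-smoothness bound centred at $\mathbf{v}^{t+1}$, the decomposition $\nabla\mathcal{F}(\mathbf{v}^{t+1})=\nabla\mathcal{F}(\mathbf{w}^{t})+\delta$ with $\|\delta\|\le\rho\eta_t\sqrt{R}$ via smoothness and Assumption~\ref{assumption_3}, Cauchy--Schwarz with Assumption~\ref{assumption_4} for the dataset mismatch, and the reduction (using $\|\nabla\mathcal{F}_{\text{syn}}(\mathbf{w}^t)\|\ge\psi$ and $\varsigma>\sqrt{\rho\varpi/2}$) to the same quadratic $(\psi-\epsilon)\varsigma^{2}-\rho\varpi\sqrt{R}\,\varsigma-\tfrac{\rho\psi\varpi}{2}>0$ at the worst case $t=0$. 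The only cosmetic difference is that the paper phrases the final step as requiring $\psi$ to exceed the positive root $x_2$ of a quadratic in $\|\nabla\mathcal{F}_{\text{syn}}(\mathbf{w}^t)\|$, which is algebraically equivalent to your condition $\psi\bigl(1-\tfrac{\rho}{2}\eta_t\gamma_t\bigr)>\epsilon+\rho\eta_t\sqrt{R}$.
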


\begin{proof}
The proof is relegated to Appendix~\ref{app_proof_theorem2}.
\end{proof}

\begin{remark} 

Theorem~\ref{theorem1} establishes that our \alg approach converges to the optimal global model, indicating that \alg preserves the model’s accuracy without any reduction in performance. Furthermore, Theorem~\ref{theorem2} highlights that EquFL achieves a lower fairness loss compared to the standard FedAvg method, demonstrating its effectiveness in improving fairness metrics.
Our theoretical analysis is based on simplifying assumptions that are widely accepted in the FL community. Nonetheless, we recognize that these assumptions may not entirely reflect real-world complexities. Extensive experimental results demonstrate that our \alg remains effective, even when some of these assumptions are only partially met, highlighting its practical applicability.

\end{remark}

\section{Experiments} \label{sec:exp}

\subsection{Experimental Setup}
\subsubsection{Datasets} 

We evaluate our method on six datasets spanning structured and image data: Income-Sex~\cite{liu2015deep}, Employment-Sex~\cite{liu2015deep}, Health-Sex~\cite{liu2015deep}, Income-Race~\cite{liu2015deep}, MNIST~\cite{lecun2010mnist}, and CIFAR-10~\cite{krizhevsky2009learning}. The first four datasets are derived from US Census data and partitioned geographically into 51 parties representing 50 states and Puerto Rico. For these datasets, the sensitive attribute is sex (or race in Income-Race). For MNIST and CIFAR-10 datasets, following~\cite{zhang2024lr,zhang2025uncertainty}, we define label parity (odd vs. even digits or classes) as the sensitive attribute, given the absence of inherent sensitive features.

\subsection{Comparison Methods, Non-IID Setting, Evaluation Metric, and Parameter Settings}
We evaluate \alg against six debiasing baselines: FLinear~\cite{he2020geometric}, FairFed~\cite{ezzeldin2023fairfed}, FedFB~\cite{zeng2021improving}, Reweight~\cite{kamiran2012data}, Gaussian, and Uniform. Details of these methods are in Appendix~\ref{sec:appendix_baseline}.
Our evaluation considers the inherent Non-IID nature of FL. The census-based datasets (Income-Sex, Employment-Sex, Health-Sex, Income-Race) are naturally heterogeneous. For MNIST, we assign each client one label; for CIFAR-10, two labels per client to simulate challenging Non-IID settings.
We use four fairness metrics: equalized odds (EO), demographic parity (DP), calibration (CAL), and consistency (CON), defined in Section~\ref{fairness_metrics}. Lower bias scores indicate fairer models.
The server collects updates during the first half of training, builds a synthetic dataset with 1,000 samples using a StandardMLP (Appendix~\ref{sec:appendix_architectures}), and then begins injecting calibrated updates. The model (e.g., network $f$) used by the server to generate the synthetic data differs from those used by the clients. FedAvg is employed for aggregation. 
Parameter settings such as network architecture, learning rate, batch size, and total training rounds are provided in Appendix~\ref{sec:appendix_settings}. The Income-Sex, Employment-Sex, Health-Sex, and Income-Race datasets involve 51 clients, representing 50 states and Puerto Rico, while MNIST and CIFAR-10 are distributed across 10 clients each. The parameter $\gamma$ is set to 1 for all datasets in our \alg.
All experiments were carried out on four NVIDIA A10 GPUs.
By default, results are reported on the Income-Sex dataset and averaged over five runs.
Variance was minimal and thus excluded.

\begin{table*}[t]
\centering
 \scriptsize
\addtolength{\tabcolsep}{-2.045pt}
\caption{Results of various debiasing methods evaluated on different fairness metrics. For FedAvg, the results are represented solely as the ``bias score'',  whereas for the debiasing methods, the results are reported in the format ``bias score (fairness improvement)''.}
 \subfloat[Income-Sex.]
 {
\begin{tabular}{|l|c|c|c|c|}
\hline
Method & EO & DP & CAL & CON \\ \hline
FedAvg & 0.0611 & 0.0934 & 0.0343 & 0.1281 \\ \hline
FLinear  & 0.0428 (29.9\%) & 0.0669 (28.3\%) &0.0270 (21.2\%)  & 0.1080 (15.6\%) \\
FairFed  & 0.0490 (19.8\%) & 0.0840 (10.1\%) &0.0250 (27.1\%)  & 0.1120 (12.5\%)  \\
FedFB  & 0.0410 (32.8\%) & 0.0590 (36.8\%) &0.0310 (9.6\%)  & 0.1240 (3.2\%) \\
Reweight  & 0.0480 (21.4\%) & 0.0790 (15.4\%) &0.0310 (9.6\%)  & 0.1230 (3.9\%) \\
Gaussian  & 0.0664 (-8.6\%) & 0.0807 (13.5\%) &0.0536 (-56.2\%)  & 0.0993 (22.4\%)  \\
Uniform  & 0.0446 (27.0\%) & 0.0632 (32.3\%) &0.0296 (13.7\%)  & 0.1228 (4.1\%)  \\
\rowcolor{greyL}
\alg & 0.0335 (45.1\%) & 0.0266 (71.5\%) &0.0224 (34.6\%)  & 0.0948 (25.9\%) \\ \hline
\end{tabular}
}
 \subfloat[Employment-Sex.]
{
\begin{tabular}{|l|c|c|c|c|}
\hline
Method & EO & DP & CAL & CON \\ \hline
FedAvg & 0.0264 & 0.0108 & 0.0097 & 0.0498 \\ \hline
FLinear & 0.0216 (18.4\%) & 0.0089 (17.9\%) & 0.0124 (-27.7\%) & 0.0467 (6.3\%) \\
FairFed & 0.0252 (4.8\%) & 0.0241 (-122.1\%) & 0.0095 (2.1\%) & 0.0515 (-3.2\%) \\
FedFB & 0.0223 (15.8\%) & 0.0087 (18.9\%) & 0.0096 (1.1\%) & 0.0501 (-0.4\%) \\
Reweight & 0.0241 (9.0\%) & 0.0092 (15.2\%) & 0.0097 (0.0\%) & 0.0472 (5.3\%) \\
Gaussian & 0.0328 (-23.8\%) & 0.0092 (15.1\%) & 0.0098 (-1.1\%) & 0.0521 (-4.4\%) \\
Uniform & 0.0274 (-3.4\%) & 0.0121 (-11.8\%) & 0.0103 (-6.9\%) & 0.0511 (-2.6\%) \\
\rowcolor{greyL}
\alg & 0.0205 (22.6\%) & 0.0085 (21.6\%) & 0.0094 (3.1\%) & 0.0431 (13.5\%) \\ \hline
\end{tabular}
}
\vspace{0.08in}
\\
 \subfloat[Health-Sex.]
{
\begin{tabular}{|l|c|c|c|c|}
\hline
Method & EO & DP & CAL & CON \\ \hline
FedAvg & 0.0561 & 0.0357 & 0.0555 & 0.1554 \\ \hline
FLinear & 0.0575 (-2.5\%) & 0.0346 (3.1\%) & 0.0573 (-3.2\%) & 0.1556 (-0.1\%) \\
FairFed & 0.0565 (-0.6\%) & 0.0405 (-13.3\%) & 0.0551 (0.7\%) & 0.1574 (-1.3\%) \\
FedFB & 0.0481 (14.2\%) & 0.0291 (18.6\%) & 0.0530 (4.5\%) & 0.1539 (1.0\%) \\
Reweight & 0.0491 (12.5\%) & 0.0294 (17.7\%) & 0.0527 (5.1\%) & 0.1546 (0.5\%) \\
Gaussian & 0.0791 (-40.9\%) & 0.0274 (23.3\%) & 0.0543 (2.2\%) & 0.1448 (6.8\%) \\
Uniform & 0.0594 (-5.8\%) & 0.0300 (16.1\%) & 0.0484 (12.8\%) & 0.1560 (-0.4\%) \\
\rowcolor{greyL}
\alg & 0.0470 (16.3\%) & 0.0262 (26.7\%) & 0.0381 (31.4\%) & 0.1358 (12.6\%) \\ \hline
\end{tabular}
}
 \subfloat[Income-Race.]
 {
 \begin{tabular}{|l|c|c|c|c|}
\hline
Method & EO & DP & CAL & CON \\ \hline
FedAvg & 0.3402 & 0.2076 & 0.1203 & 0.1262 \\ \hline
FLinear & 0.3643 (-7.1\%) & 0.2331 (-12.2\%) & 0.1182 (1.8\%) & 0.0982 (22.6\%) \\ 
FairFed & 0.3320 (2.4\%) & 0.2180 (-5.0\%) & 0.1150 (4.3\%) & 0.1252 (0.8\%) \\ 
FedFB & 0.3321 (2.4\%) & 0.1973 (5.0\%) & 0.1365 (-13.5\%) & 0.1458 (-15.8\%) \\ 
Reweight & 0.3660 (-7.6\%) & 0.2230 (-7.4\%) & 0.1350 (-12.5\%) & 0.1250 (1.0\%) \\ 
Gaussian & 0.3949 (-16.1\%) & 0.2676 (-29.0\%) & 0.1151 (4.3\%) & 0.0993 (21.6\%) \\ 
Uniform & 0.3202 (5.9\%) & 0.3093 (-48.0\%) & 0.1271 (-5.7\%) & 0.1228 (2.8\%) \\ 
\rowcolor{greyL}
\alg & 0.2980 (12.4\%) & 0.1863 (10.4\%) & 0.1147 (4.5\%) & 0.0789 (37.2\%) \\ \hline
\end{tabular}
 }
\vspace{0.08in}
\\
 \subfloat[MNIST Dataset.]
 {
\begin{tabular}{|l|c|c|c|c|}
\hline
Method & EO & DP & CAL & CON \\ \hline
FedAvg & 0.0205 & 0.1932 & 0.3761 & 0.0102 \\ \hline
FLinear & 0.0241 (-17.6\%) & 0.1871 (3.2\%) & 0.3102 (17.5\%) & 0.0085 (16.7\%) \\ 
FairFed & 0.0228 (-11.2\%) & 0.1923 (0.5\%) & 0.3724 (0.9\%) & 0.0082 (19.6\%) \\
FedFB & 0.0239 (-16.6\%) & 0.1849 (4.3\%) & 0.3721 (1.1\%) & 0.0090 (11.8\%) \\ 
Reweight & 0.0250 (-22.0\%) & 0.1742 (9.8\%) & 0.3792 (-0.8\%) & 0.0113 (10.8\%) \\
Gaussian & 0.0292 (-42.4\%) & 0.1891 (2.1\%) & 0.3831 (-1.9\%) & 0.0130 (-27.5\%) \\ 
Uniform & 0.0318 (-55.1\%) & 0.1887 (2.3\%) & 0.3925 (-4.4\%) & 0.0137 (-34.3\%) \\
\rowcolor{greyL}
\alg & 0.0137 (33.2\%) & 0.1814 (6.1\%) & 0.2726 (27.5\%) & 0.0078 (23.5\%) \\ \hline
\end{tabular}
\label{main_minst}
}
 \subfloat[CIFAR-10 Dataset.]
 {
\begin{tabular}{|l|c|c|c|c|}
\hline
Method & EO & DP & CAL & CON \\ \hline
FedAvg & 0.9886 & 0.3004 & 0.0622 & 0.4507 \\ \hline
FLinear & 0.8436 (14.7\%) & 0.2154 (28.3\%) & 0.0565 (9.2\%) & 0.3142 (30.3\%) \\
FairFed & 0.8075 (18.3\%) & 0.2348 (21.8\%) & 0.0623 (-0.2\%) & 0.3594 (20.3\%) \\ 
FedFB & 0.8875 (10.2\%) & 0.2653 (11.7\%) & 0.0784 (-26.1\%) & 0.3864 (14.3\%) \\ 

Reweight & 0.8121 (17.8\%) & 0.2095 (30.3\%) & 0.4501 (-623.0\%) & 0.2988 (33.7\%) \\ 
Gaussian & 0.9759 (1.3\%) & 0.2804 (6.7\%) & 0.1412 (-127.0\%) & 0.4301 (4.6\%) \\
Uniform & 1.0523 (-6.4\%) & 0.2960 (1.5\%) & 0.1538 (-147.3\%) & 0.4184 (7.2\%) \\
\rowcolor{greyL}
\alg & 0.7338 (25.8\%) & 0.1328 (55.8\%) & 0.0109 (82.5\%) & 0.0792 (82.4\%) \\ \hline
\end{tabular}
}
\label{main_results_fair}
\end{table*}

\subsection{Experimental Results}

\myparatight{Our proposed \alg is effective}%
Table~\ref{main_results_fair} reports the bias scores and fairness improvements of various debiasing methods across six datasets using multiple fairness metrics. For FedAvg, only the bias score is shown, while for other methods, the table presents both the bias score and the relative fairness improvement over FedAvg. A lower bias score indicates a fairer model, and a higher improvement reflects better debiasing performance.
Our method consistently outperforms the baselines. On the Income-Race dataset, it improves EO by 45.1\% and DP by 71.5\%. On CIFAR-10 with the ResNet-18 model, it achieves even larger gains, improving EO by 25.8\%, DP by 55.8\%, CAL by 82.5\%, and CON by 82.4\%.

Table~\ref{main_results_acc} in Appendix shows the test accuracy of the final global models using different debiasing methods across six datasets. FLinear, FairFed, FedFB, and Reweight are general-purpose methods that aim to improve fairness across all metrics, resulting in the same accuracy under each.
Our method effectively enhances fairness while maintaining high model utility. For instance, on the challenging CIFAR-10 dataset, \alg achieves test accuracy comparable to FedAvg, demonstrating that fairness can be improved without sacrificing performance.

\begin{figure*}[!t]
    \centering
    \includegraphics[width=\textwidth]{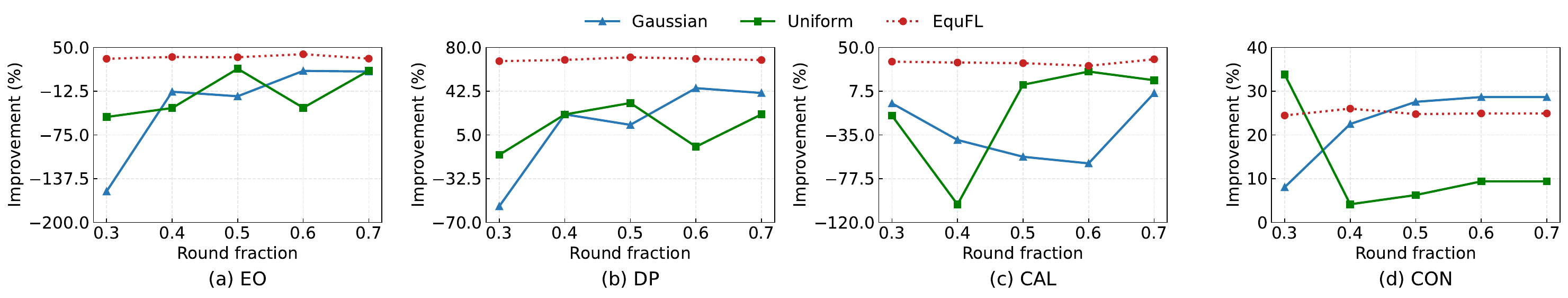}
    \caption{Impact of round fraction for calibrated update.}
    \label{fig:time}
\vspace{-.15in}
\end{figure*}

\myparatight{Impact of $\gamma$}%
In \alg, the server adds a calibrated update to the aggregated client updates, with $\gamma$ controlling the trade-off. Fig.~\ref{fig:gamma} in Appendix shows the impact of $\gamma$ on \alg and the Gaussian and Uniform baselines using the Income-Sex dataset.
As $\gamma$ increases, \alg shows steady and near-linear fairness improvements across all metrics, consistently outperforming the baselines. 
However, Gaussian and Uniform methods exhibit inconsistent fairness gains as $\gamma$ increases.

\myparatight{Impact of round fraction for calibrated update}%
This section studies how the fraction of training rounds used for generating calibrated updates affects performance. For example, if the server collects models for 40 rounds and generates calibrated updates in the remaining 60 of 100 total rounds, the round fraction is 60\%. Results for Gaussian, Uniform, and \alg are shown in Fig.~\ref{fig:time}.
\alg maintains stable fairness improvements across different round fractions, demonstrating robustness to this parameter. In contrast, Gaussian and Uniform baselines show significant performance fluctuations as the round fraction varies.

\myparatight{Impact of the size of Synthetic dataset}%
Fig.~\ref{fig:datasize} in Appendix shows how synthetic dataset size affects fairness. 
As the number of samples increases from 500 to 2000, fairness improves across metrics, with a sharp gain between 100 and 1000 samples. 
Beyond 1000, improvements plateau, indicating diminishing returns near 1500 to 2000.

\myparatight{Impact of total number of clients}%
Fig.~\ref{fig:clientnum} in Appendix shows the impact of client number on debiasing performance using the MNIST dataset, with clients ranging from 5 to 200. The Income-Sex dataset is excluded due to its fixed 51-client partition. Across all settings, \alg consistently outperforms baselines and remains stable as client numbers increase, indicating strong scalability and robustness.

\myparatight{Performance of \alg with complex aggregation rules}%
We use FedAvg as the default aggregation rule. To test \alg's compatibility with other strategies, we evaluate it under Median~\cite{yin2018byzantine}, Trimmed-mean~\cite{yin2018byzantine}, Multi-Krum~\cite{blanchard2017machine}, and DeepSight~\cite{rieger2022deepsight}. As shown in Table~\ref{tab:aggregation_results} (Appendix), \alg consistently improves fairness across all methods. 
For instance, Median reduces EO from 0.0613 to 0.0374 and DP from 0.0925 to 0.0493, showing strong versatility.

\myparatight{Impact of Non-IID}%
A key feature of FL is the Non-IID distribution of client data. Table~\ref{tab:non-iid} in Appendix examines this using MNIST, where each client holds data from only two or three labels. The Income-Sex dataset is excluded due to its inherent heterogeneity. Combined with Table~\ref{main_minst}, the results show that our method consistently reduces bias under different levels of data non-IIDness.

\myparatight{Transferability of different fairness metrics}%
Table~\ref{tab:transform_results} in Appendix shows the transferability of fairness metrics. For example, \alg-EO, which optimizes for equalized odds, also reduces DP bias from 0.0934 to 0.0627 (a 32.9\% improvement). However, improving one metric may sometimes increase bias in another, highlighting potential conflicts between fairness definitions~\cite{binns2020apparent,goethals2024beyond,mashiat2022trade}.

\myparatight{Server uses different networks to generate the synthetic dataset}%
As described in Section~\ref{section_Synthetic_data}, the server uses a neural network $f$, with an architecture different from the clients’, to generate synthetic data. Here, we examine how changing $f$'s architecture impacts performance. Details of the three architectures are in Appendix~\ref{sec:appendix_architectures}, and results are shown in Table~\ref{tab:model_arch} in Appendix. Across all architectures, our method consistently reduces bias, confirming its robustness.


\section{Discussion and Limitations} 
\label{sec:discussion_limitation}

\myparatight{Server employs different strategies to collect global models}%
We examine two strategies for collecting global models over 30 training rounds. In the ``Discrete" strategy, the server randomly selects 30 rounds. In the ``Continuous" strategy, it collects models from the first 30 rounds. 
As shown in Table~\ref{tab:continuous_discrete} (Appendix), using early rounds improves bias reduction in synthetic dataset generation.

\myparatight{Compare \alg with other debiasing methods}%
Our experiments show that \alg effectively reduces bias and outperforms existing methods. To further assess its performance, we compare it with a regularization-based approach that adds a fairness term to each client's local objective. As shown in Table~\ref{tab:Regular} in Appendix, this method offers only minor fairness gains, significantly lower than those achieved by \alg.

\myparatight{Optimize multiple fairness metrics simultaneously}%
While \alg typically optimizes one fairness metric at a time, we extend it to handle multiple metrics simultaneously. In this setting, the server generates separate calibrated updates for EO, DP, CAL, and CON, and merges them using a multi-objective optimization technique. Table~\ref{tab:moo_results} (Appendix) shows that \alg-Multi, using MGDA~\cite{desideri2012multiple}, effectively reduces bias across all metrics.

\begin{figure}[!t]
    \centering
    \includegraphics[width=0.49\textwidth]{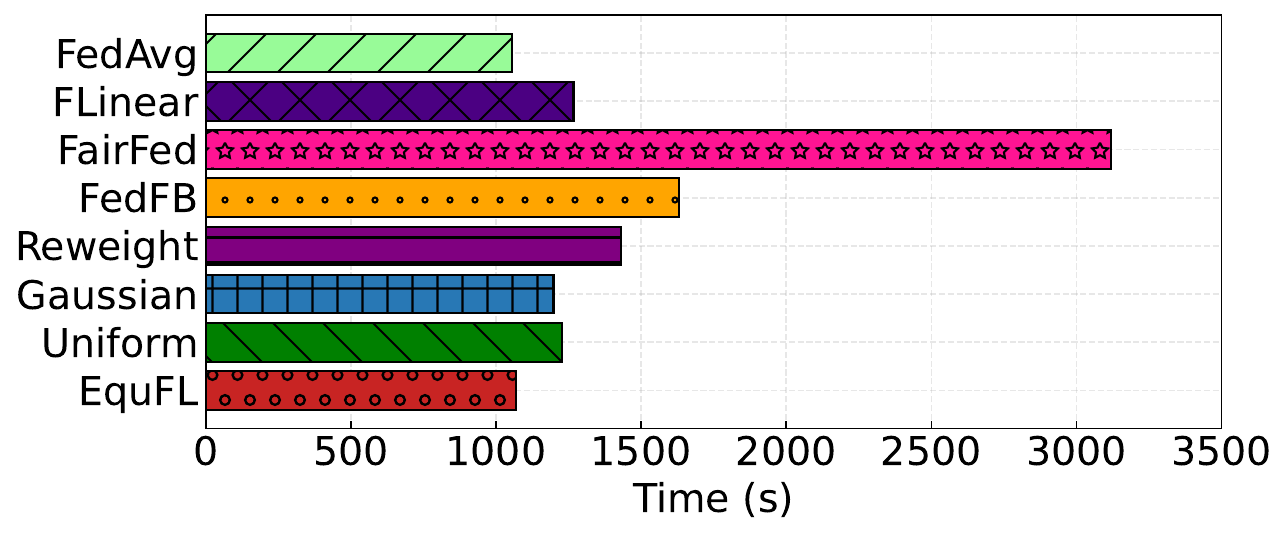}
    \caption{Computation costs.}
    \label{fig:cost_comm}
     \vspace{-4mm}
\end{figure}

\myparatight{Storage and computation cost for the server}%
In \alg, the server stores collected global models and the synthetic dataset, leading to modest storage overhead. As shown in Table~\ref{tab:storage_cost} in Appendix, for CIFAR-10, total storage is 450.02 MB, which is acceptable for modern servers.
Fig.~\ref{fig:cost_comm} shows the computation cost of different methods on the Income-Sex dataset using the EO metric.
While \alg involves additional steps for synthetic data and calibrated updates, its total computation time is similar to FedAvg.

\myparatight{Security concerns of \alg}%
This paper focuses on fairness in non-adversarial FL settings, where all clients behave honestly. Even when clients suffer from hardware failures and send unreliable updates, \alg remains effective. It is compatible with any Byzantine-robust aggregation rule, as its design is aggregation-agnostic.
As shown in Table~\ref{tab:aggregation_results}, \alg maintains strong performance and further improves fairness under robust schemes like Median and DeepSight, highlighting its versatility.

\myparatight{Privacy concerns of \alg}%
To generate the synthetic dataset, \alg requires the server to store selected global models, which may raise privacy concerns. However, these risks can be mitigated using techniques like  differential privacy~\cite{abadi2016deep}. 
As an example, we follow the standard DP-SGD~\cite{abadi2016deep} approach, where each client first clips its gradients to a fixed norm bound $C$, then adds Gaussian noise $\mathcal{N}(0, \sigma^2 C^2 I)$, with $I$ being the identity matrix. In our experiments, we set $C = 0.05$ and vary $\sigma$ in $\{0.1, 0.2, 0.3\}$ to explore different noise levels, and Income-Sex dataset is considered. 
Table~\ref{tab:dp_noise_levels} (Appendix) reports the performance of our method under these settings. For comparison, Table~\ref{tab:fedavg_dp_noise} (Appendix) shows the test accuracy of FedAvg (without calibrated updates). Results indicate that \alg remains effective at reducing system bias when moderate noise is applied. 
However, adding too much noise can harm model utility. For example, FedAvg’s test accuracy drops from 0.7491 without noise ($\sigma{=}0$) to 0.6960 when $\sigma{=}0.3$. This illustrates a key trade-off: differential privacy enhances data protection but may reduce model performance.
%


\section{Conclusion} \label{sec:conclusion}

In this paper, we proposed \alg, a server-side method that enhances fairness in FL by generating a calibrated update. Unlike prior approaches, \alg collects selected global models during training to build a synthetic dataset, which is then used to create a single calibrated update that reduces system bias.
We provided theoretical guarantees and validate \alg through extensive experiments, showing strong fairness improvements with minimal impact on accuracy.

\section*{Acknowledgments}
We thank the anonymous reviewers for their comments.

\bibliography{refs}
\bibliographystyle{IEEEtran}

\onecolumn

\appendix

\begin{algorithm}[H]
\caption{DataSyn.}
\label{syn_data_gen}
\begin{algorithmic}[1]
\Require Global model checkpoints $\{\mathbf{w}^1, \mathbf{w}^2, \ldots, \mathbf{w}^s\}$, learning rate $\eta_t$, training iterations $\varkappa$, network $f$, parameter $\vartheta$.
\State Initialize $\mathbf{X}_{\text{syn}}^1$ and associate them with $\mathbf{Y}_{\text{syn}}^1$.
\For{$\varrho = 1$ to $\varkappa$}
    \State Randomly
    select two global models $\mathbf{w}^{\tau}$ and $\mathbf{w}^{\tau+\vartheta}$ from $\{\mathbf{w}^1, \mathbf{w}^2, \ldots, \mathbf{w}^s\}$.
   \State Train the network $f$ on the current synthetic dataset with $\mathbf{w}^{\tau}$ for $\vartheta$ steps to obtain $\overrightarrow{\mathbf{w}}$. 
    \State Compute $\Pi(\mathbf{X}_{\text{syn}}, \mathbf{Y}_{\text{syn}})$, followed by computing the gradients $\nabla_{\mathbf{X}_{\text{syn}}^{\varrho}} \Pi(\mathbf{X}_{\text{syn}}, \mathbf{Y}_{\text{syn}})$ and $\nabla_{\mathbf{Y}_{\text{syn}}^{\varrho}} \Pi(\mathbf{X}_{\text{syn}}, \mathbf{Y}_{\text{syn}})$.
  \State Update features and labels as $\mathbf{X}_{\text{syn}}^{\varrho+1}=\mathbf{X}_{\text{syn}}^{\varrho} - \eta_{\varrho} \cdot \nabla_{\mathbf{X}_{\text{syn}}^{\varrho}} \Pi(\mathbf{X}_{\text{syn}}, \mathbf{Y}_{\text{syn}})$, $\mathbf{Y}_{\text{syn}}^{\varrho+1}=\mathbf{Y}_{\text{syn}}^{\varrho} - \eta_{\varrho} \cdot \nabla_{\mathbf{Y}_{\text{syn}}^{\varrho}} \Pi(\mathbf{X}_{\text{syn}}, \mathbf{Y}_{\text{syn}})$. 
  \label{syn_data_update}
\EndFor \\
\Return $\mathcal{D}_{syn}$
\end{algorithmic}
\end{algorithm}

\begin{algorithm}[H]
	\caption{\alg.}
    \label{our_alg}
	\begin{algorithmic}[1]
		\renewcommand{\algorithmicrequire}{\textbf{Input:}}
		\renewcommand{\algorithmicensure}{\textbf{Output:}}
		\Require The $n$ clients with local training datasets $\mathcal{D}_i, i=1,2,\cdots,n$; aggregation rule $\mathsf{GAR}(\cdot)$; fairness metric $\mathcal{M}$; number of global training rounds $T$; learning rate $\eta_t$; network $f$; parameters $\gamma_t,s, \varkappa, \vartheta$.
		\Ensure Global model $\mathbf{w}^T$. 
		\State Random initialize $\mathbf{w}^1$.
        \State $\mathcal{S} \leftarrow \emptyset$.
		\For{$t=1,2,\cdots,T$}
		    \State // Step I (Global model synchronization).
		    \State The server sends the current global model $\mathbf{w}^t$ to all clients.
                 \If {$t \le s$}
                  \State   $\mathcal{S} \leftarrow \mathcal{S} \cup \{\mathbf{w}^t\}$.
                 \EndIf
	        \State // Step II (Local model training).
		    \For {each client $i=1,2,\cdots,n$ in parallel}
                \State Client $i$ updates its local model using $\mathbf{w}^t$ and its local data $\mathcal{D}_i$, then sends the update $\mathbf{g}^t_{i}$ to the server.
		    \EndFor
		    \State // Step III (Aggregation and global model updating).
                \If {$t \le s$}
                 \State  $\mathbf{w}^{t+1} = \mathbf{w}^t - \eta_t \cdot \mathsf{GAR}(\mathbf{g}^t_{1}, \mathbf{g}^t_{2}, \cdots, \mathbf{g}^t_{n}),$
                \EndIf
                \If {$t = s+1$}
                \label{Construct_one}
                \State  // Construct the synthetic dataset $\mathcal{D}_{\text{syn}}$. Note that $\mathcal{D}_{\text{syn}}$ is constructed only at round $s+1$.
                \State $\mathcal{D}_{\text{syn}}=\text{DataSyn}(\mathcal{S},\eta_t,\varkappa,f,\vartheta)$.
                \EndIf
                 \label{Construct_two}
                \If {$t \ge s+1$}
                \State Compute the calibrated update \( \mathbf{g}^t_{0} \) based on Eq.~(\ref{compute_g_0}).
                \State $\mathbf{w}^{t+1}= \mathbf{w}^t - \eta_t \cdot (\gamma_t \cdot \mathbf{g}^t_{0} + \mathsf{GAR}(\mathbf{g}^t_{1}, \mathbf{g}^t_{2}, \cdots, \mathbf{g}^t_{n}))$.
                \EndIf
		\EndFor\\
		\Return $\mathbf{w}$.
	\end{algorithmic} 
\end{algorithm}

\subsection{Optimization Problems for Other Fairness Metrics} \label{sec:appendix_1}

\subsubsection{Optimization Problem for Demographic Parity Metric}

The DP loss ensures that the model's predictions are independent of the sensitive attribute groups. It is defined as:
\begin{align}
     \mathcal{F}_{\text{DP}}(\mathbf{w}^{t+1}, \mathcal{D}_{\text{syn}}) = \sum_{h, k \in G } \bigg| \frac{1}{|\mathcal{D}_{\text{syn}}^{h}|}  
    \sum_{z \in \mathcal{D}_{\text{syn}}^{h}}  l(\mathbf{w}^{t+1}, z)   - 
    \frac{1}{|\mathcal{D}_{\text{syn}}^{k}|} \sum_{q \in \mathcal{D}_{\text{syn}}^{k}}  l(\mathbf{w}^{t+1}, q) \bigg|,
\end{align}
where $\mathbf{w}^{t+1}$ is still determined based on Eq.~(\ref{compute_w_t_1}), \(\mathcal{D}_{\text{syn}}^{h} = \{ z \in \mathcal{D}_{\text{syn}} : A = h \}\) denotes the subset of data points in the synthetic dataset \(\mathcal{D}_{\text{syn}}\) that are part of group \( h \).

\subsubsection{Optimization Problem for Calibration Metric}

The Calibration loss measures the difference in prediction errors between the overall positive class and each subgroup within it. It is defined as:
\begin{align}
      \mathcal{F}_{\text{CAL}}(\mathbf{w}^{t+1}, \mathcal{D}_{\text{syn}}) &= \sum_{h \in G}\bigg| \frac{1}{|\mathcal{D}_{\text{syn}}^1|}  
    \sum_{z \in \mathcal{D}_{\text{syn}}^1} l(\mathbf{w}^{t+1}, z)  - 
    \frac{1}{|\mathcal{D}_{\text{syn}}^{h,1}|} \sum_{q \in \mathcal{D}_{\text{syn}}^{h,1}}  l(\mathbf{w}^{t+1}, q) \bigg|,
\end{align}
where $\mathbf{w}^{t+1}$ is still determined based on Eq.~(\ref{compute_w_t_1}), \(\mathcal{D}_{\text{syn}}^{h,1} = \{ q \in \mathcal{D}_{\text{syn}} : A = h, Y = 1 \}\) denotes the subset of data points in the synthetic dataset \(\mathcal{D}_{\text{syn}}\) that are part of group \( h \) and have the true label \( y = 1 \), \(\mathcal{D}_{\text{syn}}^1 = \{ z \in \mathcal{D}_{\text{syn}} : Y = 1\}\) denotes the subset of data points in the synthetic dataset \(\mathcal{D}_{\text{syn}}\) that have the true label \( y = 1 \).

\subsubsection{Optimization Problem for Consistency Metric}

The CON loss assesses the consistency of model predictions for similar data points. For each sample \( z \in \mathcal{D}_{\text{syn}} \), identify its \( k \) nearest neighbors \( \mathcal{D}_k(z) \) based on feature similarity. The consistency loss is defined as:
\begin{align}
    \mathcal{F}_{\text{CON}}(\mathbf{w}^{t+1}, \mathcal{D}_{\text{syn}}) &= \frac{1}{|\mathcal{D}_{\text{syn}}|} \sum_{z \in \mathcal{D}_{\text{syn}}} \bigg| l(\mathbf{w}^{t+1}, z)  - \frac{1}{k} \sum_{q \in \mathcal{D}_k(z)}  l(\mathbf{w}^{t+1}, q) \bigg|,
\end{align}
where $\mathbf{w}^{t+1}$ is still determined based on Eq.~(\ref{compute_w_t_1}), \( \mathcal{D}_k(z) = \{ q \in \mathcal{D}_{\text{syn}} : q \text{ is among the } k \text{ nearest neighbors of } z \text{ in } \mathcal{D}_{\text{syn}} \}\) denotes the subset of data points in the synthetic dataset \(\mathcal{D}_{\text{syn}}\).

\subsection{Details of Assumption~\ref{assumption_1}}
\label{app_assumption_1}

The loss functions are \(\mu\)-strongly convex. For any \(\mathbf{w}_1, \mathbf{w}_2 \in \mathbb{R}^d\), the following inequalities hold:
\begin{gather*}
   \mathcal{L}(\mathbf{w}_1)  \geq \mathcal{L}(\mathbf{w}_2) + \nabla \mathcal{L}(\mathbf{w}_2)^{\top} (\mathbf{w}_1 - \mathbf{w}_2) + \frac{\mu}{2} \|\mathbf{w}_1 - \mathbf{w}_2\|^2 ,  \\
   \mathcal{L}_i(\mathbf{w}_1) 
     \geq \mathcal{L}_i(\mathbf{w}_2) + \nabla \mathcal{L}_i(\mathbf{w}_2)^{\top} (\mathbf{w}_1 - \mathbf{w}_2) + \frac{\mu}{2} \|\mathbf{w}_1 - \mathbf{w}_2\|^2,  \\
   \mathcal{F}(\mathbf{w}_1)  \geq \mathcal{F}(\mathbf{w}_2) + \nabla \mathcal{F}(\mathbf{w}_2)^{\top} (\mathbf{w}_1 - \mathbf{w}_2) + \frac{\mu}{2} \|\mathbf{w}_1 - \mathbf{w}_2\|^2,  \\
    \mathcal{F}_{\text{syn}}(\mathbf{w}_1)  \geq \mathcal{F}_{\text{syn}}(\mathbf{w}_2) + \nabla \mathcal{F}_{\text{syn}}(\mathbf{w}_2)^{\top} (\mathbf{w}_1 - \mathbf{w}_2)   
     + \frac{\mu}{2} \|\mathbf{w}_1 - \mathbf{w}_2\|^2.
\end{gather*}

The loss functions are \(\rho\)-smooth. For any \(\mathbf{w}_1, \mathbf{w}_2 \in \mathbb{R}^d\), the following inequalities are satisfied:
\begin{gather*}
   \mathcal{L}(\mathbf{w}_1) \leq \mathcal{L}(\mathbf{w}_2) + \nabla \mathcal{L}(\mathbf{w}_2)^{\top} (\mathbf{w}_1 - \mathbf{w}_2) + \frac{\rho}{2} \|\mathbf{w}_1 - \mathbf{w}_2 \|^2, \\
  \mathcal{L}_i(\mathbf{w}_1) \leq \mathcal{L}_i(\mathbf{w}_2) + \nabla \mathcal{L}_i(\mathbf{w}_2)^{\top} (\mathbf{w}_1 - \mathbf{w}_2) + \frac{\rho}{2} \|\mathbf{w}_1 - \mathbf{w}_2 \|^2, \\
    \mathcal{F}(\mathbf{w}_1)  \leq \mathcal{F}(\mathbf{w}_2) + \nabla \mathcal{F}(\mathbf{w}_2)^{\top} (\mathbf{w}_1 - \mathbf{w}_2) + \frac{\rho}{2} \|\mathbf{w}_1 - \mathbf{w}_2\|^2,  \\
    \mathcal{F}_{\text{syn}}(\mathbf{w}_1)  \leq \mathcal{F}_{\text{syn}}(\mathbf{w}_2) + \nabla \mathcal{F}_{\text{syn}}(\mathbf{w}_2)^{\top} (\mathbf{w}_1 - \mathbf{w}_2)  
     + \frac{\rho}{2} \|\mathbf{w}_1 - \mathbf{w}_2\|^2.
\end{gather*}

\subsection{Proof of Theorem~\ref{theorem1}}
\label{app_proof_theorem1}

According to Lemma~\ref{lemma2}, we have
    \begin{align}
        \Delta_{t+1} &\leq \left(1 - \mu(\eta_t + \eta_t \gamma_t )\right) \Delta_t + M_1 \eta_t^2 + M_2 \eta_t\gamma_t,
    \end{align}
where\(M_1 =4\rho\Gamma_1\) ,\(M_2 =2\Gamma_2\).
First, we use mathematical induction to prove the following inequality:
    \begin{align}
    \label{th1_delta}
    \Delta_t\leq\frac{\nu}{\varsigma+t}.
    \end{align}
$\textcircled{1}$ When $n = 1$
\begin{align}
\Delta_1 &\leq \frac{\nu}{\varsigma+1}\\
\nu&\geq \Delta_1(\varsigma+1)=\mathcal{Z}_1
\end{align}

$\textcircled{2}$ When $n = t+1$
    \begin{align}
\Delta_{t+1}&\leq(1-\frac{\mu \varpi}{t+\varsigma}-\frac{\mu \varpi}{(t+\varsigma)^{2}})\frac{\nu}{t+\varsigma}\: +\:M_1\frac{\varpi^{2}}{(t+\varsigma)^{2}}\:+\:M_2\frac{\varpi}{(t+\varsigma)^{2}}\\
&\stackrel{(a)}\leq ( 1- \frac {\mu \varpi }{t+ \varsigma } ) \frac \nu {t+ \varsigma } + M_1\frac {\varpi ^{2}}{( t+ \varsigma ) ^{2}} + M_2\frac \varpi {( t+ \varsigma ) ^{2}}\\
&=\frac{\nu\left(t+\varsigma-1\right)}{(t+\varsigma)^{2}}+\frac{((1-\mu \varpi)\nu+M_1 \varpi^{2}+M_2 \varpi)}{(t+\varsigma)^{2}}\\
&\stackrel{(b)}\leq \frac{\nu\left(t+\varsigma-1\right)}{(t+\varsigma)^{2}}\\
&\stackrel{(c)}\leq \frac {\nu} {t+ 1+ \varsigma },
\end{align}
where $(a)$ is due to $\frac{\mu \varpi}{(t+\varsigma)^{2}} \geq 0$, $(b)$ is due to
\begin{align}
\nu\geq \frac{M_1\varpi^{2}+M_2\varpi}{\mu \varpi-1} =\mathcal{Z}_2 \Rightarrow ((1-\mu \varpi)\nu+M_1 \varpi^{2}+M_2 \varpi)\leq0,
\end{align}
$(c)$ is due to
\begin{align}
(t+ \varsigma+1)(t+ \varsigma-1)<(t+ \varsigma)^2\Rightarrow \frac{\left(t+\varsigma-1\right)}{(t+\varsigma)^{2}}\leq\frac {1} {t+ 1+ \varsigma },
\end{align}
When $t=T$, we can get
\begin{align}
||\mathbf{w}^{T} - \mathbf{w}^*||^2=\Delta_T\leq\frac{\nu}{\varsigma+T}.
\end{align}
By the $\rho$-smooth of $\mathcal{L}$, we have
    \begin{align}
    \mathcal{L}(\mathbf{w}^{T})-\mathcal{L}^{*}
    &\stackrel{(a)}\leq \frac{\rho}{2}||\mathbf{w}^{T} - \mathbf{w}^*||^2
    \stackrel{(b)}\leq \frac{\rho}{2} \frac{\nu}{\varsigma+T}
    \end{align}
where $(a)$ is due to Assumption~\ref{assumption_1}, $(b)$ is based on Eq.~(\ref{th1_delta}).

\subsection{Proof of Theorem~\ref{theorem2}}
\label{app_proof_theorem2}

We have the following update rules:
\[
\left\{
\begin{aligned}
    \mathbf{w}^{t+1} &= \mathbf{w}^t - \eta_t  \sum_{i=1}^{n} \alpha_i \mathbf{g}^t_{i}  - \eta_t \gamma_t \mathbf{g}^t_{0}, \\
    \mathbf{v}^{t+1} &= \mathbf{w}^t - \eta_t  \sum_{i=1}^{n} \alpha_i \mathbf{g}^t_{i}.
\end{aligned}
\right.
\]
Therefore,
\begin{align}
    \mathbf{w}^{t+1} &= \mathbf{v}^{t+1} - \eta_t\gamma_t  \mathbf{g}^t_{0} = \mathbf{v}^{t+1} - \eta_t\gamma_t \nabla \mathcal{F}_{\text{syn}}(\mathbf{w}^t).
\end{align}

According to the Taylor expansion and the \( \rho \)-smoothness of \( \mathcal{F} \), we have:
\begin{align}
    \mathcal{F}(\mathbf{w}^{t+1}) &\leq \mathcal{F}(\mathbf{v}^{t+1}) + \langle \nabla \mathcal{F}(\mathbf{v}^{t+1}), \mathbf{w}^{t+1} - \mathbf{v}^{t+1} \rangle  + \dfrac{\rho}{2} \| \mathbf{w}^{t+1} - \mathbf{v}^{t+1} \|^2.
\end{align}

Substituting \( \mathbf{w}^{t+1} - \mathbf{v}^{t+1} = -\eta_t\gamma_t \nabla \mathcal{F}_{\text{syn}}(\mathbf{w}^t) \), we get:
\begin{align}
    \mathcal{F}(\mathbf{w}^{t+1}) - \mathcal{F}(\mathbf{v}^{t+1}) &\leq -\eta_t\gamma_t \langle \nabla \mathcal{F}(\mathbf{v}^{t+1}), \nabla \mathcal{F}_{\text{syn}}(\mathbf{w}^{t}) \rangle  + \dfrac{\rho \eta_t^2\gamma_t^2}{2} \| \nabla \mathcal{F}_{\text{syn}}(\mathbf{w}^t) \|^2.\label{eq:delta_G}
\end{align}
Define:
\begin{align}
    E &= -\eta_t\gamma_t \langle \nabla \mathcal{F}(\mathbf{v}^{t+1}), \nabla \mathcal{F}_{\text{syn}}(\mathbf{w}^{t}) \rangle + \dfrac{\rho \eta_t^2\gamma_t^2}{2} \| \nabla \mathcal{F}_{\text{syn}}(\mathbf{w}^t) \|^2.
\end{align}
We can further decompose \( \nabla \mathcal{F}(\mathbf{v}^{t+1}) \) as:
\begin{align}
    \nabla \mathcal{F}(\mathbf{v}^{t+1}) = \nabla \mathcal{F}(\mathbf{w}^t) + \delta,
\end{align}
where \( \delta = \nabla \mathcal{F}(\mathbf{v}^{t+1}) - \nabla \mathcal{F}(\mathbf{w}^t) \).

Since \( \mathcal{F} \) is \( \rho \)-smooth, we have:
\begin{align}
    \| \delta \| &= \| \nabla \mathcal{F}(\mathbf{v}^{t+1}) - \nabla \mathcal{F}(\mathbf{w}^t)\|\\ &\leq \rho \| \mathbf{v}^{t+1} - \mathbf{w}^t \| \\
    & = \rho \eta_t \| \sum_{i=1}^{n} \alpha_i \mathbf{g}^t_{i} \|\\
    & = \rho \eta_t \| \sum_{i=1}^{n} \alpha_i \nabla \mathcal{L}_{i}(\mathbf{w}^t) \|\\
    & = \rho \eta_t \|  \nabla \mathcal{L}(\mathbf{w}^t) \|\\
    &\label{them2_delta}\stackrel{(a)}\leq \rho \eta_t \sqrt{R},
\end{align}
where \( (a) \) is based on Assumption~\ref{assumption_3}.\\
\begin{align}
    \langle \nabla \mathcal{F}(\mathbf{v}^{t+1}), \nabla \mathcal{F}_{\text{syn}}(\mathbf{w}^{t}) \rangle &= \langle\nabla \mathcal{F}(\mathbf{w}^t) + \delta, \nabla \mathcal{F}_{\text{syn}}(\mathbf{w}^t) \rangle \\
    &   \label{eq:inner_product}= \underbrace{\langle\nabla \mathcal{F}(\mathbf{w}^t) , \nabla \mathcal{F}_{\text{syn}}(\mathbf{w}^t) \rangle}_{G} \underbrace{+ \langle \delta, \nabla \mathcal{F}_{\text{syn}}(\mathbf{w}^t) \rangle}_{H}.
\end{align}
\begin{align}
    G&= \langle\nabla \mathcal{F}(\mathbf{w}^t) , \nabla \mathcal{F}_{\text{syn}}(\mathbf{w}^t) \rangle\\
    &= \| \nabla \mathcal{F}_{\text{syn}}(\mathbf{w}^t)  \|^2 - \langle \nabla \mathcal{F}_{\text{syn}}(\mathbf{w}^t)  - \nabla \mathcal{F}(\mathbf{w}^t), \nabla \mathcal{F}_{\text{syn}}(\mathbf{w}^t)  \rangle \\
    &\stackrel{(a)}\geq \| \nabla \mathcal{F}_{\text{syn}}(\mathbf{w}^t)  \|^2 - \| \nabla \mathcal{F}_{\text{syn}}(\mathbf{w}^t)  - \nabla \mathcal{F}(\mathbf{w}^t) \| \cdot \| \nabla \mathcal{F}_{\text{syn}}(\mathbf{w}^t)  \| \\
    &\stackrel{(b)}\geq \| \nabla \mathcal{F}_{\text{syn}}(\mathbf{w}^t)  \|^2 - \epsilon \| \nabla \mathcal{F}_{\text{syn}}(\mathbf{w}^t)  \|, \label{them2_G}
\end{align}
where $(a)$ is based on the Cauchy-Schwarz inequality and $(b)$ is due to Assumption~\ref{assumption_4}.

We also have that:
\begin{align}
    H&=\langle \delta, \nabla \mathcal{F}_{\text{syn}}(\mathbf{w}^t) \rangle \stackrel{(c)}\geq -\| \delta \| \cdot \| \nabla \mathcal{F}_{\text{syn}}(\mathbf{w}^t) \| \stackrel{(d)} \geq -\rho \eta_t \sqrt{R} \| \nabla \mathcal{F}_{\text{syn}}(\mathbf{w}^t) \|, \label{them2_H}
\end{align}
where $(c)$ is based on the Cauchy-Schwarz inequality and $(d)$ is due to Eq.~(\ref{them2_delta}).

\begin{align}
     \langle \nabla \mathcal{F}(\mathbf{v}^{t+1}), \nabla \mathcal{F}_{\text{syn}}(\mathbf{w}^{t}) \rangle 
     &\stackrel{(e)}\geq \| \nabla \mathcal{F}_{\text{syn}}(\mathbf{w}^{t}) \|^2 - \epsilon \| \nabla \mathcal{F}_{\text{syn}}(\mathbf{w}^{t}) \|  -\rho \eta_t \sqrt{R} \| \nabla \mathcal{F}_{\text{syn}}(\mathbf{w}^{t}) \|\label{them2_GH},
\end{align}
where $(e)$ is based on the Eq.~(\ref{them2_H}) and Eq.~(\ref{them2_G}).\\
\begin{align}
    E &\stackrel{(a)}\leq -\eta_t\gamma_t( \| \nabla \mathcal{F}_{\text{syn}}(\mathbf{w}^{t}) \|^2 - \epsilon \| \nabla \mathcal{F}_{\text{syn}}(\mathbf{w}^{t}) \|  -\rho \eta_t \sqrt{R} \| \nabla \mathcal{F}_{\text{syn}}(\mathbf{w}^{t}) \|)+\dfrac{\rho \eta_t^2\gamma_t^2}{2} \| \nabla \mathcal{F}_{\text{syn}}(\mathbf{w}^t) \|^2\\
    &= \eta_t\gamma_t \left( \epsilon + \rho \eta_t \sqrt{R} \right) \| \mathcal{F}_{\text{syn}}(\mathbf{w}^t) \|   + (\dfrac{\rho \eta_t^2\gamma_t^2}{2}-\eta_t\gamma_t) \| \mathcal{F}_{\text{syn}}(\mathbf{w}^t) \|^2 .\label{eq:E_final}
\end{align}
We need to analyze the following expression and determine under what conditions \( E < 0 \): Let \( x = \| \mathcal{F}_{\text{syn}}(\mathbf{w}^t) \| \). 
\begin{align}
    I = (\dfrac{\rho \eta_t^2\gamma_t^2}{2}-\eta_t\gamma_t) x^2 + \eta_t\gamma_t \left( \epsilon + \rho \eta_t \sqrt{R} \right) x.
\end{align}
This is a quadratic function of \( x \) in the form:
\begin{align}
    I = q x^2 + p x,
\end{align}
where:
\begin{align}
    q = (\dfrac{\rho \eta_t^2\gamma_t^2}{2}-\eta_t\gamma_t),\ 
    p = \eta_t\gamma_t \left( \epsilon + \rho \eta_t \sqrt{R} \right).
\end{align}
Since \(\varsigma^2 > \frac{\rho \varpi}{2}\), we can have \( \eta_t\gamma_t > 0 \) and \(\eta_t\gamma_t < \dfrac{2}{\rho}\), \( q <0 \) .\\
Set \( I = 0 \):
\begin{align}
    q x^2 + p x = 0.
\end{align}
Solving for \( x \):
\begin{align}
    x_1 = 0 \quad \text{or} \quad x_2 = -\dfrac{p}{q}.
\end{align}
Compute:
\begin{align}
    x_2 &= -\dfrac{ \eta_t\gamma_t \left( \epsilon + \rho \eta_t \sqrt{R} \right) }{ \eta_t\gamma_t \left( \dfrac{\rho \eta_t\gamma_t}{2} - 1 \right) } = \dfrac{  \epsilon + \rho \eta_t \sqrt{R}  }{ 1- \dfrac{\rho \eta_t\gamma_t}{2} } .
\end{align}

To ensure \( I < 0 \) and \(x=\| \mathcal{F}_{\text{syn}}(\mathbf{w}^t) \|\geq \psi\), we need:
\begin{align}
     \psi > x_{2}.
\end{align}

Since the \(\eta_t = \frac{\varpi}{t+\varsigma}\) and \(\gamma_t =\frac{1}{t+\varsigma} \), the \(x_2\) is equal to
\begin{align}
x_2 &= \dfrac{ \epsilon + \rho \eta_t \sqrt{R} }{ 1 - \dfrac{\rho \eta_t \gamma_t}{2} } = \dfrac{ \epsilon + \dfrac{ \rho \varpi \sqrt{R} }{ t + \varsigma } }{ 1 - \dfrac{ \rho \varpi }{ 2 ( t + \varsigma )^2 } }.
\end{align}

To ensure \( \psi > x_2 \), we have:
\begin{align}
\psi &> \dfrac{ \epsilon + \dfrac{ \rho \varpi \sqrt{R} }{ t + \varsigma } }{ 1 - \dfrac{ \rho \varpi }{ 2 ( t + \varsigma )^2 } } \\
\psi \left( 1 - \dfrac{ \rho \varpi }{ 2 ( t + \varsigma )^2 } \right ) &> \epsilon + \dfrac{ \rho \varpi \sqrt{R} }{ t + \varsigma } \\
\psi - \dfrac{ \psi \rho \varpi }{ 2 ( t + \varsigma )^2 } &> \epsilon + \dfrac{ \rho \varpi \sqrt{R} }{ t + \varsigma } \\
\psi - \epsilon &> \dfrac{ \rho \varpi \sqrt{R} }{ t + \varsigma } + \dfrac{ \psi \rho \varpi }{ 2 ( t + \varsigma )^2 }.
\end{align}

At \( t = 0 \), the inequality becomes:
\begin{align}
\psi - \epsilon - \dfrac{ \rho \varpi \sqrt{R} }{ \varsigma } - \dfrac{ \psi \rho \varpi }{ 2 \varsigma^2 } &> 0.
\end{align}

Let \( C_1 = \rho \varpi \sqrt{R} \) and \( C_2 = \dfrac{ \psi \rho \varpi }{ 2 } \). Then:
\begin{align}
( \psi - \epsilon ) \varsigma^2 - C_1 \varsigma - C_2 &> 0.
\end{align}
Solving the quadratic equation, we can get the root which is greater than 0.
\begin{align}
root &= \dfrac{ C_1 + \sqrt{ C_1^2 + 4 ( \psi - \epsilon ) C_2 } }{ 2 ( \psi - \epsilon ) } 
= \dfrac{ \rho \varpi \sqrt{R} + \sqrt{ ( \rho \varpi )^2 R + 2 ( \psi - \epsilon ) \psi \rho \varpi } }{ 2 ( \psi - \epsilon ) }.
\end{align}
Therefore, the lower bound for \( b \) is:
\begin{align}
\varsigma &> \dfrac{ \rho \varpi \sqrt{R} + \sqrt{ ( \rho \varpi )^2 R + 2 ( \psi - \epsilon ) \psi \rho \varpi } }{ 2 ( \psi - \epsilon ) } \\
\varsigma &> \sqrt{ \dfrac{ \rho \varpi }{ 2 } }.
\end{align}
Combining both, we have:
\begin{align}
\varsigma &> \max \left\{ \sqrt{ \dfrac{ \rho \varpi }{ 2 } }, \ \dfrac{ \rho \varpi \sqrt{R} + \sqrt{ ( \rho \varpi )^2 R + 2 ( \psi - \epsilon ) \psi \rho \varpi } }{ 2 ( \psi - \epsilon ) } \right\}.
\end{align}

\subsection{Useful Technical Lemmas} 
\label{sec:appendix_lemma}

\begin{lemma}
    \label{lem:gradient_bound}
    Assume Assumption~\ref{assumption_1} holds. It follows that,
    \begin{align}
    \label{lemma1_l}
            \left\|\nabla \mathcal{L}_k \left(\mathbf{w}^t\right)\right\|^2 \leq 2\rho \left(\mathcal{L}_i\left(\mathbf{w}^t\right) - \mathcal{L}_i^*\right).
    \end{align}
    \begin{align}
    \label{lemma1_f}
            \left\|\nabla \mathcal{F}_{\text{syn}}\left(\mathbf{w}^t\right)\right\|^2 \leq 2\rho\left(\mathcal{F}_{\text{syn}}\left(\mathbf{w}^t\right) - \mathcal{F}_{\text{syn}}^*\right).
    \end{align}
    \label{lemma1}
\end{lemma}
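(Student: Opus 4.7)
The plan is to derive both inequalities from a single generic statement: for any $\rho$-smooth function $h$ with finite minimum value $h^{*}$, one has $\|\nabla h(\mathbf{w})\|^{2} \leq 2\rho\bigl(h(\mathbf{w}) - h^{*}\bigr)$ at every point $\mathbf{w}$. Since Assumption~\ref{assumption_1} supplies $\rho$-smoothness for both the local client loss $\mathcal{L}_i$ and the synthetic fairness loss $\mathcal{F}_{\text{syn}}$, the two claims of Lemma~\ref{lemma1} are obtained by instantiating this one bound with $h = \mathcal{L}_i$ and $h = \mathcal{F}_{\text{syn}}$ respectively, evaluated at $\mathbf{w} = \mathbf{w}^t$.

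To prove the generic bound, I would first apply the $\rho$-smoothness upper quadratic from Assumption~\ref{assumption_1} with anchor point $\mathbf{w}$ and an arbitrary test point $\mathbf{y}$, namely
\begin{equation*}
h(\mathbf{y}) \leq h(\mathbf{w}) + \nabla h(\mathbf{w})^{\top}(\mathbf{y} - \mathbf{w}) + \frac{\rho}{2}\|\mathbf{y} - \mathbf{w}\|^{2}.
\end{equation*}
The right-hand side is a convex quadratic in $\mathbf{y}$ that is minimized at the one-step gradient point $\mathbf{y}^{\circ} = \mathbf{w} - \tfrac{1}{\rho}\nabla h(\mathbf{w})$, where it attains the value $h(\mathbf{w}) - \tfrac{1}{2\rho}\|\nabla h(\mathbf{w})\|^{2}$. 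Evaluating the inequality at this particular $\mathbf{y}^{\circ}$ and using $h(\mathbf{y}^{\circ}) \geq h^{*}$ gives $h^{*} \leq h(\mathbf{w}) - \tfrac{1}{2\rho}\|\nabla h(\mathbf{w})\|^{2}$, which rearranges directly into the claimed inequality after multiplying through by $2\rho$.

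Specializing the bound to $h = \mathcal{L}_i$ with $h^{*} = \mathcal{L}_i^{*}$, and separately to $h = \mathcal{F}_{\text{syn}}$ with $h^{*} = \mathcal{F}_{\text{syn}}^{*}$, yields Eq.~(\ref{lemma1_l}) and Eq.~(\ref{lemma1_f}) respectively (interpreting the $k$ in the first inequality as the same index $i$). Notice that strong convexity plays no role in this argument; only $\rho$-smoothness and the finiteness of the minimum are invoked, and both properties are guaranteed by Assumption~\ref{assumption_1}.

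Because the derivation is a textbook consequence of the descent lemma, I do not anticipate a genuine obstacle; the only subtlety is purely expositional, namely making explicit which component of Assumption~\ref{assumption_1} is being used (the smooth upper quadratic, not the strongly convex lower one) and why minimizing over a single trial point already suffices to bring $h^{*}$ into play.
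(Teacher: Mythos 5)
Your proposal is correct and follows essentially the same route as the paper: both apply the $\rho$-smoothness upper quadratic at the one-step gradient point $\mathbf{w} - \tfrac{1}{\rho}\nabla h(\mathbf{w})$, lower-bound the resulting value by the minimum $h^{*}$, and rearrange, instantiating with $\mathcal{L}_i$ and $\mathcal{F}_{\text{syn}}$. Your framing as a single generic bound is just a cleaner packaging of the identical argument.
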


\begin{proof}
We begin by utilizing the well-known inequality for \( \rho \)-smooth functions. For any \( \mathbf{x}, \mathbf{y} \in \mathbb{R}^n \), the following holds:
\begin{align}
    \mathcal{L}_i(\mathbf{y}) \leq \mathcal{L}_k(\mathbf{x}) + \nabla \mathcal{L}_i(\mathbf{x})^\top (\mathbf{y} - \mathbf{x}) + \frac{\rho}{2} \left\| \mathbf{y} - \mathbf{x} \right\|^2,
\end{align}
where \( \nabla \mathcal{L}_i(\mathbf{x})^\top \) denotes the transpose of the gradient of \( \mathcal{L}_i(\mathbf{x}) \) at \( \mathbf{x} \).

Next, we substitute \( \mathbf{y} = \mathbf{x} - \frac{1}{\rho} \nabla \mathcal{L}_i(\mathbf{x}) \) into the inequality:
\begin{align}
    \mathcal{L}_i(\mathbf{y}) \leq \mathcal{L}_i(\mathbf{x}) - \frac{1}{2\rho} \left\| \nabla \mathcal{L}_i(\mathbf{x}) \right\|^2.
\end{align}

Given that \( \mathcal{L}_i^* \) is the optimal value of the function \( \mathcal{L}_i \), we have \( \mathcal{L}_i^* \leq \mathcal{L}_i(\mathbf{y}) \). Therefore,
\begin{align}
    \mathcal{L}_i^* \leq \mathcal{L}_i(\mathbf{y}) \leq \mathcal{L}_i(\mathbf{x}) - \frac{1}{2\rho} \left\| \nabla \mathcal{L}_i(\mathbf{x}) \right\|^2.
\end{align}

Rearranging the terms yields:
\begin{align}
    \frac{1}{2\rho} \left\| \nabla \mathcal{L}_i(\mathbf{x}) \right\|^2 \leq \mathcal{L}_i(\mathbf{x}) - \mathcal{L}_i^*,
\end{align}\\
which can be equivalently expressed as:
\begin{align}
    \left\| \nabla \mathcal{L}_i\left(\mathbf{w}^t\right) \right\|^2 \leq 2\rho\left(\mathcal{L}_i\left(\mathbf{w}^t\right) - \mathcal{L}_i^*\right).
\end{align}

This completes the proof of inequality Eq.~(\ref{lemma1_l}). By following a similar procedure, inequality Eq.~(\ref{lemma1_f}) can be proven in the same manner.
\end{proof}

\begin{lemma}
    Consider the sequence \( \{\Delta_t\} \) defined as \( \Delta_t = \|\mathbf{w}^t - \mathbf{w}^*\|^2 \). Under Assumption~\ref{assumption_1} to Assumption~\ref{assumption_4}, if $\eta_t < \frac{1}{2\rho}$ and $\gamma_t \leq 1$,the following inequality holds:
    \begin{align}
        \Delta_{t+1} &\leq \left(1 - \mu(\eta_t + \eta_t \gamma_t )\right) \Delta_t + 4\rho\Gamma_1 \eta_t^2 + 2\Gamma_2 \eta_t\gamma_t.
    \end{align}
    \label{lemma2}
\end{lemma}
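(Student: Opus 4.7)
The plan is to expand $\Delta_{t+1}=\|\mathbf{w}^{t+1}-\mathbf{w}^*\|^2$ directly from the update rule, peel off a cross-product contribution using strong convexity of both $\mathcal{L}$ and $\mathcal{F}_{\text{syn}}$, and then bound the residual squared-norm contribution via smoothness combined with Lemma~\ref{lemma1}. Finally, the stepsize conditions $\eta_t<\tfrac{1}{2\rho}$ and $\gamma_t\le 1$ will let me kill the leftover non-negative terms so that only the desired heterogeneity constants $\Gamma_1$ and $\Gamma_2$ survive.

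Concretely, since the server uses FedAvg and $\mathbf{g}_0^t=\nabla\mathcal{F}_{\text{syn}}(\mathbf{w}^t)$, the update collapses to $\mathbf{w}^{t+1}=\mathbf{w}^t-\eta_t\nabla\mathcal{L}(\mathbf{w}^t)-\eta_t\gamma_t\nabla\mathcal{F}_{\text{syn}}(\mathbf{w}^t)$. Expanding the squared norm produces three pieces: the base $\Delta_t$, two inner products $-2\eta_t\langle\mathbf{w}^t-\mathbf{w}^*,\nabla\mathcal{L}(\mathbf{w}^t)\rangle$ and $-2\eta_t\gamma_t\langle\mathbf{w}^t-\mathbf{w}^*,\nabla\mathcal{F}_{\text{syn}}(\mathbf{w}^t)\rangle$, and a squared-norm residual $\eta_t^2\|\nabla\mathcal{L}(\mathbf{w}^t)+\gamma_t\nabla\mathcal{F}_{\text{syn}}(\mathbf{w}^t)\|^2$. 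For the two inner products I will apply $\mu$-strong convexity from Assumption~\ref{assumption_1}, which yields $-\mu\eta_t\Delta_t-\mu\eta_t\gamma_t\Delta_t$ together with the penalties $-2\eta_t[\mathcal{L}(\mathbf{w}^t)-\mathcal{L}^*]$ and $-2\eta_t\gamma_t[\mathcal{F}_{\text{syn}}(\mathbf{w}^t)-\mathcal{F}_{\text{syn}}(\mathbf{w}^*)]$.

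For the residual, I will use $\|a+\gamma_t b\|^2\le 2\|a\|^2+2\gamma_t^2\|b\|^2$, then bound $\|\nabla\mathcal{L}(\mathbf{w}^t)\|^2\le\sum_i\alpha_i\|\nabla\mathcal{L}_i(\mathbf{w}^t)\|^2$ by Jensen and apply Eq.~(\ref{lemma1_l}) from Lemma~\ref{lemma1} to obtain $\|\nabla\mathcal{L}(\mathbf{w}^t)\|^2\le 2\rho(\mathcal{L}(\mathbf{w}^t)-\mathcal{L}^*)+2\rho\Gamma_1$ after invoking the definition $\Gamma_1=\mathcal{L}^*-\sum_i\alpha_i\mathcal{L}_i^*$. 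Similarly Eq.~(\ref{lemma1_f}) gives $\|\nabla\mathcal{F}_{\text{syn}}(\mathbf{w}^t)\|^2\le 2\rho(\mathcal{F}_{\text{syn}}(\mathbf{w}^t)-\mathcal{F}_{\text{syn}}^*)$.

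Collecting terms, the $\mathcal{L}$ contribution becomes $-2\eta_t(1-2\rho\eta_t)[\mathcal{L}(\mathbf{w}^t)-\mathcal{L}^*]+4\rho\eta_t^2\Gamma_1$, and since $\eta_t<\tfrac{1}{2\rho}$ the first bracket is non-negative and can be dropped. The trickier bookkeeping is on the fairness side, and this is what I expect to be the main obstacle: $\mathbf{w}^*$ is the minimizer of $\mathcal{L}$, not of $\mathcal{F}_{\text{syn}}$, so the penalty $-2\eta_t\gamma_t[\mathcal{F}_{\text{syn}}(\mathbf{w}^t)-\mathcal{F}_{\text{syn}}(\mathbf{w}^*)]$ is not sign-definite. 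I will handle it by writing $\mathcal{F}_{\text{syn}}(\mathbf{w}^t)-\mathcal{F}_{\text{syn}}(\mathbf{w}^*)=[\mathcal{F}_{\text{syn}}(\mathbf{w}^t)-\mathcal{F}_{\text{syn}}^*]-\Gamma_2$, which produces the explicit additive term $+2\eta_t\gamma_t\Gamma_2$ while the remaining $-2\eta_t\gamma_t(1-2\rho\eta_t\gamma_t)[\mathcal{F}_{\text{syn}}(\mathbf{w}^t)-\mathcal{F}_{\text{syn}}^*]$ is non-positive under $\eta_t\gamma_t<\tfrac{1}{2\rho}$ (guaranteed by $\eta_t<\tfrac{1}{2\rho}$ and $\gamma_t\le 1$) and can be dropped. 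Putting the surviving pieces together yields exactly $\Delta_{t+1}\le(1-\mu\eta_t-\mu\eta_t\gamma_t)\Delta_t+4\rho\Gamma_1\eta_t^2+2\Gamma_2\eta_t\gamma_t$, completing the proof.
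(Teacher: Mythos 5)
Your proof is correct and follows essentially the same route as the paper: expand the squared distance under the FedAvg-plus-calibration update, handle the two inner products via $\mu$-strong convexity, bound the squared-gradient residual with $\|a+b\|^2\le 2\|a\|^2+2\|b\|^2$, Jensen, and Lemma~\ref{lemma1}, and then use $\eta_t<\tfrac{1}{2\rho}$, $\gamma_t\le 1$ to drop the non-positive loss-gap terms, leaving exactly the $4\rho\eta_t^2\Gamma_1$ and $2\eta_t\gamma_t\Gamma_2$ contributions. The only cosmetic difference is that you invoke strong convexity of the aggregate loss $\mathcal{L}$ and do the $\Gamma_1$ bookkeeping inside the gradient bound, whereas the paper works per-client with $\mathcal{L}_i$ and recombines at the end; the two are identical by linearity.
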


\begin{proof}
We begin by expanding the term \( \Delta_{t+1} \) as follows:
\begin{flalign}
\Delta_{t+1} &= \|\mathbf{w}^{t+1} - \mathbf{w}^*\|^2 \\
&= \|\mathbf{w}^t - \eta_t \sum_{i=1}^{n} \alpha_i \mathbf{g}^t_{i} - \eta_t \gamma_t \mathbf{g}^t_{0} - \mathbf{w}^*\|^2 \\
&\leq \|\mathbf{w}^t - \mathbf{w}^*\|^2 - 2\langle \mathbf{w}^t - \mathbf{w}^*, \eta_t \sum_{i=1}^{n} \alpha_i \mathbf{g}^t_{i}  + \eta_t \gamma_t \mathbf{g}^t_{0} \rangle  + \|\eta_t \sum_{i=1}^{n} \alpha_i \mathbf{g}^t_{i}  + \eta_t \gamma_t \mathbf{g}^t_{0}\|^2 \\
&\leq \underbrace{\|\mathbf{w}^t - \mathbf{w}^*\|^2}_{A} \underbrace{- 2\langle \mathbf{w}^t - \mathbf{w}^*, \eta_t \sum_{i=1}^{n} \alpha_i \mathbf{g}^t_{i}  + \eta_t \gamma_t \mathbf{g}^t_{0} \rangle}_{B}  \underbrace{+ 2\eta_t^2 \|\sum_{i=1}^{n} \alpha_i \mathbf{g}^t_{i} \|^2 + 2\eta_t^2\gamma_t^2 \|\mathbf{g}^t_{0}\|^2}_{C} .
\end{flalign}

Next, we decompose term \( B \) into two components \( B_1 \) and \( B_2 \) and analyze each separately:
\begin{align}
B &= B_1 + B_2,
\end{align}%
where%
\begin{align}
B_1 &=-2\eta_t\sum_{i=1}^{n}\alpha_{i}<\mathbf{w}^t-\mathbf{w}^{*},\mathbf{g}^t_{i}> &&\\
    &=-2\eta_t\sum_{i=1}^{n}\alpha_{i}<\mathbf{w}^t-\mathbf{w}^{*},\nabla \mathcal{L}_{i}(\mathbf{w}^t)> &&\\
    &\stackrel{(a)}{\leq}-2\eta_t \sum_{i=1}^{n}\alpha_{i}\left\{\mathcal{L}_{i}(\mathbf{w}^t)-\mathcal{L}_{i}(\mathbf{w}^{*})+\frac{\mu}{2}||\mathbf{w}^t-\mathbf{w}^{*}||^{2}\right\} &&\\
    &=\sum_{i=1}^{n}\alpha_{i}\left(-\mu\eta_t ||\mathbf{w}^t-\mathbf{w}^{*}||^{2}-2\eta_t \left(\mathcal{L}_{i}(\mathbf{w}^t)-\mathcal{L}_{i}(\mathbf{w}^{*})\right)\right) &&\\
    &\label{lamma1_b1}=-\mu\eta_t ||\mathbf{w}^t-\mathbf{w}^{*}||^{2}-2\eta_t \sum_{i=1}^{n}\alpha_{i}\left(\mathcal{L}_{i}(\mathbf{w}^t)-\mathcal{L}_{i}(\mathbf{w}^{*})\right), &&
\end{align}
where inequality \( (a) \) follows from the strong convexity of \( \mathcal{L}_k \).

Similarly, for \( B_2 \), we have:
\begin{align}
B_2 &= -2\eta_t \gamma_t \langle \mathbf{w}^t - \mathbf{w}^*, \nabla \mathcal{F}_{\text{syn}}(\mathbf{w}^t) \rangle \\
&\label{lamma1_b2}\stackrel{(b)}{\leq} -\mu \eta_t \gamma_t \|\mathbf{w}^t - \mathbf{w}^*\|^2 - 2\eta_t \gamma_t \left( \mathcal{F}_{\text{syn}}(\mathbf{w}^t) - \mathcal{F}_{\text{syn}}(\mathbf{w}^*) \right),
\end{align}
where inequality \( (b) \) follows from the strong convexity of \( \mathcal{F}_{\text{syn}}\).

Next, we consider term \( C \) and decompose it into two components \( C_1 \) and \( C_2 \) as follows:
\begin{flalign}
C &= C_1 + C_2,
\end{flalign}
where
\begin{align}
C_1 &= 2\eta_t^2 \|\sum_{i=1}^{n} \alpha_i \mathbf{g}^t_{i} \|^2  \\
&= 2\eta_t^2 \|\sum_{i=1}^{n} \alpha_i \nabla  \mathcal{L}_{i}(\mathbf{w}^t) \|^2  \\
&\stackrel{(c)}{\leq} 2\eta_t^2\sum_{i=1}^{n} \alpha_i  \| \nabla  \mathcal{L}_{i}(\mathbf{w}^t)\|^2 \\
&\label{lamma1_c1} \stackrel{(d)}{\leq} 4\rho\eta_t^2 \sum_{i=1}^{n} \alpha_i \left( \mathcal{L}_{i}(\mathbf{w}^t) - \mathcal{L}_{i}^* \right),
\end{align}
and
\begin{align}
C_2 &= 2\eta_t^2\gamma_t^2 \|\mathbf{g}^t_{0}\|^2 \\
&= 2\eta_t^2\gamma_t^2 \|\nabla \mathcal{F}_{\text{syn}}(\mathbf{w}^t)\|^2 \\
&\label{lamma1_c2} \stackrel{(e)}{\leq} 4\rho\eta_t^2\gamma_t^2 \left( \mathcal{F}_{\text{syn}}(\mathbf{w}^t) - \mathcal{F}_{\text{syn}}^* \right),
\end{align}
where inequalities \( (c) \) is based on the convexity and \( (d) \) \( (e) \) follow from Lemma~\ref{lemma1}.

We now combine \( B_1 \) and \( C_1 \) into a single term denoted \( \text{Part}_1 \), and \( B_2 \) and \( C_2 \) into a term denoted \( \text{Part}_2 \). This leads to:
\begin{align}
\Delta_{t+1} &\leq A + B + C \\
&= A + (B_1 + C_1) + (B_2 + C_2) \\
&= A + \text{Part}_1 + \text{Part}_2.
\end{align}
\begin{align}
Part_{1}&=B_{1}\:+\:C_{1}\:\\
& \stackrel{(a)}\leq -\mu\eta_t ||\mathbf{w}^t-\mathbf{w}^{*}||^{2}-2\eta_t \sum_{i=1}^{n}\alpha_{i}\left(\mathcal{L}_{i}(\mathbf{w}^t)-\mathcal{L}_{i}(\mathbf{w}^{*})\right)+4\rho\eta_t^{2}\sum_{i=1}^{n}\alpha_{i}\left(\mathcal{L}_{i}(\mathbf{w}^t)-\mathcal{L}_{i}^{*}\right)\\
&=-\mu \eta_t ||\mathbf{w}^t-\mathbf{w}^{*}||^{2}+\left(4\rho\eta_t^{2}-2\eta_t\right)\sum_{i=1}^{n}\alpha_{i}\left(\mathcal{L}_{i}(\mathbf{w}^t)-\mathcal{L}_i^{*}\right)\notag+2\eta_t\sum_{i=1}^{n}\alpha_{i}\left(\mathcal{L}_{i}\left(\mathbf{w}^*\right)-\mathcal{L}_i^{*}\right)\\
&=-\mu \eta_t ||\mathbf{w}^t-\mathbf{w}^{*}||^{2}+ D \label{lemma3_p1}
\end{align}
where $(a)$ is based on Eq.~(\ref{lamma1_b1}) and Eq.~(\ref{lamma1_c1}).
\begin{align}
D&=\left(4\rho\eta_t^{2}-2\eta_t\right)\sum_{i=1}^{n}\alpha_{i}\left(\mathcal{L}_{i}(\mathbf{w}^t)-\mathcal{L}_{i}^{*}\right)+2\eta_t\sum_{i=1}^{n}\alpha_{i}\left(\mathcal{L}_{i}\left(\mathbf{w}^*\right)-\mathcal{L}_i^{*}\right)\\
&=\left(4\rho\eta_t^{2}-2\eta_t\right)\left(\mathcal{L}(\mathbf{w}^t)-\sum_{i=1}^{n}\alpha_{i}\mathcal{L}_{i}^{*}\right) +2\eta_t\left(\mathcal{L}^*-\sum_{i=1}^{n}\alpha_{i}\mathcal{L}_{i}^{*}\right)\\
&=\left(4\rho\eta_t^{2}-2\eta_t\right)\left(\mathcal{L}(\mathbf{w}^t)-\mathcal{L}^{*}\right) +4\rho\eta_t^{2}\left(\mathcal{L}^*-\sum_{i=1}^{n}\alpha_{i}\mathcal{L}_{i}^{*}\right)\\
&=\left(4\rho\eta_t^{2}-2\eta_t\right)\left(\mathcal{L}(\mathbf{w}^t)-\mathcal{L}^{*}\right)+4\rho\eta_t^{2}\Gamma_1\\
&\label{lemma_d}\stackrel{(b)}\leq 4\rho\eta_t^{2}\Gamma_1,
\end{align}
where $(b)$ is due to the following facts:
\begin{itemize}
    \item $\eta_t < \frac{1}{2\rho}\Rightarrow 4\rho\eta_t^{2}-2\eta_t <0$
    \item $\mathcal{L}^* = \min (\mathcal{L})\Rightarrow \mathcal{L}\left(\mathbf{w}^t\right)-\mathcal{L}^{*}>0$.
\end{itemize}

So we can get:
\begin{align}
Part_1& =  -\mu\eta_t ||\mathbf{w}^t-\mathbf{w}^{*}||^{2} + D\ \label{lemma1_p1}\stackrel{(c)}\leq -\mu\eta_t ||\mathbf{w}^t-\mathbf{w}^{*}||^{2} + 4\rho\eta_t^{2}\Gamma_1,
\end{align}
where $(c)$ is due to Eq.~(\ref{lemma_d}).

Similarly, one has that:
\begin{align}
Part_2&= B_{2}+C_{2}\\
&\stackrel{(a)}\leq - 2\eta_t \gamma_t \left( \mathcal{F}_{\text{syn}}(\mathbf{w}^t) - \mathcal{F}_{\text{syn}}(\mathbf{w}^*) \right)-\mu \eta_t \gamma_t \|\mathbf{w}^t - \mathbf{w}^*\|^2+4\rho\eta_t^2\gamma_t^2 \left( \mathcal{F}_{\text{syn}}(\mathbf{w}^t) - \mathcal{F}_{\text{syn}}^* \right)\\
&=\left(4\rho\eta_t^{2}\gamma_t^{2}-2\eta_t\gamma_t\right)\left( \mathcal{F}_{\text{syn}}(\mathbf{w}^t) - \mathcal{F}_{\text{syn}}^* \right) -\mu \eta_t \gamma_t \|\mathbf{w}^t - \mathbf{w}^*\|^2+2\eta_t\gamma_t \left(\mathcal{F}_{\text{syn}}(\mathbf{w}^*) - \mathcal{F}_{\text{syn}}^*\right)\\
&\label{lemma1_p2}\stackrel{(b)}\leq -\mu \eta_t \gamma_t \|\mathbf{w}^t - \mathbf{w}^*\|^2+2\eta_t\gamma_t\Gamma_2,
\end{align}
where $(a)$ is based on Eq.~(\ref{lamma1_b2}) Eq.~(\ref{lamma1_c2}), $(b)$ is due to the following facts:
\begin{itemize}
    \item $\eta_t\gamma_t < \frac{1}{2\rho}\Rightarrow 4\rho\eta_t^{2}\gamma_t^{2}-2\eta_t\gamma_t <0$
    \item $\mathcal{F}_{\text{syn}}^* = \min(\mathcal{F}_{\text{syn}})\Rightarrow \mathcal{F}_{\text{syn}}(\mathbf{w}^t) - \mathcal{F}_{\text{syn}}^*>0$
    \item $ \mathcal{F}_{\text{syn}}(\mathbf{w}^*) - \mathcal{F}_{\text{syn}}^* =\Gamma_2$.
\end{itemize}

By integrating the above results, one has that:
\begin{align}
\Delta_{t+1}&=A+Part_1+Part_2\\
&\stackrel{(a)}\leq(1-\mu \eta_t-\mu\eta_t\gamma_t)\|\mathbf{w}^t-\mathbf{w}^*\|^{2}+\left(4\rho\eta_t^{2}\Gamma_1+2\eta_t\gamma_t \Gamma_{2}\right)\\
&=(1-\mu \eta_t-\mu\eta_t\gamma_t)\Delta_t+\left(4\rho\eta_t^{2}\Gamma_1+2\eta_t\gamma_t \Gamma_{2}\right).
\end{align}
where $(a)$ is based on Eq.~(\ref{lemma1_p1}) Eq.~(\ref{lemma1_p2}).
\end{proof}

\subsection{Details of Comparison Debiasing Methods} 
\label{sec:appendix_baseline}

\myparatight{Fair linear representation (FLinear)~\cite{he2020geometric}} Each client applies a pre-processing debiasing strategy known as fair linear representations, designed to mitigate bias in the dataset before model training.

\myparatight{FairFed~\cite{ezzeldin2023fairfed}}In FairFed, each client debiases its local dataset and evaluates global model fairness, collaborating with the server to adjust aggregation weights and enhance overall fairness.

\myparatight{FedFB~\cite{zeng2021improving}}A method that adapts FedAvg to achieve centralized fair learning by incorporating fairness constraints.

\myparatight{Local reweighting (Reweight)~\cite{kamiran2012data}}A preprocessing technique that reweights training samples locally to mitigate discrimination.

\myparatight{Gaussian}The server creates a calibrated update by sampling from a normal distribution with a mean of 0 and a standard deviation of 2. This generated update is then added to the aggregated update from the clients.

\myparatight{Uniform}The server produces a calibrated update by drawing random values for each dimension from a uniform distribution within the interval \([-2, 2]\). This randomly generated update is then combined with the aggregated client updates to form the final update.

\subsection{Details of Neural Network Architectures} 
\label{sec:appendix_architectures}

\myparatight{StandardMLP}A conventional Multi-Layer Perceptron with one hidden layer of 64 units, serving as our default architecture.

\myparatight{DeepMLP}A deeper version of the model features two hidden layers, the first with 64 units and the second with 32 units, increasing the model's depth while keeping the overall parameter count comparable to that of StandardMLP.

\myparatight{WideMLP}A wider architecture with one hidden layer of 128 units, doubling the width of StandardMLP while keeping the same depth.

\subsection{Details of Parameter Settings} 
\label{sec:appendix_settings}

For model training, we employ a two-layer neural network on the Income-Sex, Employment-Sex, Health-Sex, and Income-Race datasets, a two-layer CNN for MNIST, and a complex ResNet-18~\cite{he2016deep} model for CIFAR-10. Learning rate and batch size are set to 0.1 and 64 for the first four datasets, 0.01 and 32 for MNIST, and 0.002 and 16 for CIFAR-10. Training involves 100 communication rounds for the first four datasets, 30 rounds for MNIST, and 20 rounds for CIFAR-10.

\begin{table*}[t]
\centering
\footnotesize
\addtolength{\tabcolsep}{-1.565pt}
\caption{Test accuracy of the final global model learned using various debiasing methods.}
 \subfloat[Income-Sex.]
 {
\begin{tabular}{|l|c|c|c|c|}
\hline
Method & EO & DP & CAL & CON \\ \hline
FedAvg & 0.7491 & 0.7491 & 0.7491 & 0.7491 \\ \hline
FLinear  & 0.7400  & 0.7400  & 0.7400  & 0.7400  \\
FairFed  & 0.7532 & 0.7532 &0.7532  & 0.7532 \\
FedFB  & 0.7422 & 0.7422 & 0.7422 & 0.7422 \\
Reweight   & 0.7386 & 0.7386 & 0.7386 & 0.7386 \\
Gaussian  & 0.7005 & 0.7005 & 0.7005 & 0.7005 \\
Uniform  & 0.7223 & 0.7223 & 0.7223 & 0.7223 \\
\rowcolor{greyL}
\alg & 0.7076 & 0.7043 & 0.7122 & 0.7259 \\ \hline
\end{tabular}
}
 \subfloat[Employment-Sex.]
 {
\begin{tabular}{|l|c|c|c|c|}
\hline
Method & EO & DP & CAL & CON \\ \hline
FedAvg & 0.7095 & 0.7095 & 0.7095 & 0.7095 \\ \hline
FLinear  & 0.7030  & 0.7030  &0.7030  & 0.7030  \\
FairFed  & 0.6043 & 0.6043 & 0.6043 & 0.6043 \\
FedFB  & 0.7000 & 0.7000 & 0.7000 & 0.7000 \\
Reweight   & 0.7012 & 0.7012 & 0.7012 & 0.7012 \\
Gaussian  & 0.7034 & 0.7034 & 0.7034 & 0.7034 \\
Uniform  & 0.6846 & 0.6846 & 0.6846 & 0.6846 \\
\rowcolor{greyL}
\alg & 0.7049 & 0.7057 & 0.7060 &0.7061  \\ \hline
\end{tabular}
}
 \subfloat[Health-Sex.]
 {
\begin{tabular}{|l|c|c|c|c|}
\hline
Method & EO & DP & CAL & CON \\ \hline
FedAvg & 0.8243 & 0.8243 & 0.8243 & 0.8243 \\ \hline
FLinear  & 0.8242  & 0.8242 &0.8242  & 0.8242  \\
FairFed  & 0.8202 & 0.8202 & 0.8202 & 0.8202 \\
FedFB  & 0.8020 & 0.8020 & 0.8020 & 0.8020 \\
Reweight   & 0.8106 & 0.8106 & 0.8106 & 0.8106 \\
Gaussian  & 0.8106 & 0.8106 & 0.8106 & 0.8106 \\
Uniform  & 0.8046 & 0.8046 & 0.8046 & 0.8046 \\
\rowcolor{greyL}
\alg & 0.8170 & 0.8170 & 0.8170 & 0.8170 \\ \hline
\end{tabular}
}
\vspace{0.08in}
\\
 \subfloat[Income-Race.]
 {
\begin{tabular}{|l|c|c|c|c|}
\hline
Method & EO & DP & CAL & CON \\ \hline
FedAvg & 0.7490 & 0.7490 & 0.7490 & 0.7490 \\ \hline
FLinear  & 0.7480  & 0.7480  &0.7480  & 0.7480  \\
FairFed  & 0.7480 & 0.7480 & 0.7480 & 0.7480 \\
FedFB  & 0.7410 & 0.7410 & 0.7410 & 0.7410 \\
Reweight   & 0.7320 & 0.7320 & 0.7320 & 0.7320 \\
Gaussian  & 0.7008 & 0.7008 & 0.7008 & 0.7008 \\
Uniform  & 0.7193 & 0.7193 & 0.7193 & 0.7193 \\
\rowcolor{greyL}
\alg & 0.7230 & 0.7329 & 0.7317 & 0.7010 \\ \hline
\end{tabular}
}
 \subfloat[MNIST Dataset.]
{
\begin{tabular}{|l|c|c|c|c|}
\hline
Method & EO & DP & CAL & CON \\ \hline
FedAvg & 0.9685 & 0.9685 & 0.9685 & 0.9685 \\ \hline
FLinear  & 0.9523 & 0.9523 & 0.9523 & 0.9523 \\  
FairFed  & 0.9562 & 0.9562 & 0.9562 & 0.9562 \\ 
FedFB  & 0.9550 & 0.9550 & 0.9550 & 0.9550 \\ 
Reweight & 0.9631 & 0.9631 & 0.9631 & 0.9631 \\ 
Gaussian & 0.9114 & 0.9114 & 0.9114 & 0.9114 \\ 
Uniform & 0.9102 & 0.9102 & 0.9102 & 0.9102 \\
\rowcolor{greyL}
\alg & 0.9670 & 0.9672 & 0.9673 & 0.9684 \\ \hline
\end{tabular}
}
 \subfloat[CIFAR-10 Dataset.]
{
\begin{tabular}{|l|c|c|c|c|}
\hline
Method & EO & DP & CAL & CON \\ \hline
FedAvg & 0.7915 & 0.7915 & 0.7915 & 0.7915 \\ \hline
FLinear  & 0.7632 & 0.7632 & 0.7632 & 0.7632 \\
FairFed  & 0.7709 & 0.7709 & 0.7709 & 0.7709 \\
FedFB  & 0.7525 & 0.7525 & 0.7525 & 0.7525 \\
Reweight & 0.7680 & 0.7680 & 0.7680 & 0.7680 \\
Gaussian & 0.6132 & 0.6132 & 0.6132 & 0.6132 \\
Uniform & 0.5828 & 0.5828 & 0.5828 & 0.5828 \\
\rowcolor{greyL}
\alg & 0.7532 & 0.7643 & 0.7778 & 0.7596 \\ \hline
\end{tabular}
}
\label{main_results_acc}
\end{table*}

\begin{table*}[t]
\centering
\footnotesize
\addtolength{\tabcolsep}{-3.065pt}
\caption{Results of \alg across various fairness metrics, where the server employs complex aggregation rules to combine client updates.}
\label{tab:aggregation_results}
 \subfloat[Median.]
 {
\begin{tabular}{|l|c|c|c|c|}
\hline
Method & EO & DP & CAL & CON \\ \hline
Median & 0.0613 & 0.0925 & 0.0355 & 0.1252 \\ \hline
\rowcolor{greyL}
\alg & 0.0374 (39.0\%) & 0.0493 (46.7\%) & 0.0315(11.3\%) & 0.1230 (1.8\%) \\ \hline
\end{tabular}
}
 \subfloat[Trimmed-mean.]
 {
\begin{tabular}{|l|c|c|c|c|}
\hline
Method & EO & DP & CAL & CON \\ \hline
Trim & 0.0613 & 0.0918 & 0.0332 & 0.1264 \\ \hline
\rowcolor{greyL}
\alg & 0.0363 (40.8\%) & 0.0475 (48.3\%) &0.0294(11.4\%) & 0.1211 (4.2\%) \\ \hline
\end{tabular}
}
\vspace{0.08in}
\\
 \subfloat[Multi-Krum.]
 {
 \addtolength{\tabcolsep}{-1.65pt}
\begin{tabular}{|l|c|c|c|c|}
\hline
Method & EO & DP & CAL & CON \\ \hline
Multi-krum & 0.0570 & 0.0922 & 0.0303 & 0.1285 \\ \hline
\rowcolor{greyL}
\alg & 0.0355 (37.7\%) & 0.0453 (50.9\%) &  0.0248(18.2\%) & 0.1132 (11.9\%) \\ \hline
\end{tabular}
}
 \subfloat[DeepSight.]
 {
 \addtolength{\tabcolsep}{-0.9pt}
\begin{tabular}{|l|c|c|c|c|}
\hline
Method & EO & DP & CAL & CON \\ \hline
DeepSight & 0.0192 & 0.0093 & 0.1215 & 0.0563 \\ \hline
\rowcolor{greyL}
\alg &  0.0112(41.6\%) & 0.0087 (6.5\%) &  0.0966(20.5\%) & 0.0132 (76.6\%) \\ \hline
\end{tabular}
}
\label{other_rule}
\end{table*}

\begin{table*}[h]
\centering
 \scriptsize
\addtolength{\tabcolsep}{-2.045pt}
\caption{Impact of degree of Non-IID.}
\label{tab:non-iid}
 \subfloat[Each client only has two labeled training examples.]
{
\begin{tabular}{|l|c|c|c|c|}
\hline
Method & EO & DP & CAL & CON \\ \hline
FedAvg & 0.0268 & 0.1961 & 0.3817 & 0.0091 \\ \hline
FLinear & 0.0252 (6.0\%) & 0.1902 (3.0\%) & 0.3236 (15.2\%) & 0.0089 (2.2\%) \\
FairFed & 0.0248 (7.5\%) & 0.1963 (-0.1\%) & 0.3781 (0.9\%) & 0.0085 (6.6\%) \\
FedFB & 0.0257 (4.1\%) & 0.1894 (3.4\%) & 0.0380 (90.0\%) & 0.0089 (2.2\%) \\
Reweight & 0.0278 (-3.7\%) & 0.0213 (89.1\%) & 0.3816 (0.0\%) & 0.0117 (-28.6\%) \\ 
Gaussian & 0.0319 (-19.0\%) & 0.1929 (1.6\%) & 0.3847 (-0.8\%) & 0.0130 (-42.9\%) \\ 
Uniform & 0.0336 (-25.4\%) & 0.1929 (1.6\%) & 0.3977 (-4.2\%) & 0.0138 (-51.6\%) \\ 
\rowcolor{greyL}
\alg & 0.0152 (43.3\%) & 0.1841 (6.1\%) & 0.2789 (26.9\%) & 0.0072 (20.9\%) \\ \hline
\end{tabular}
}
 \subfloat[Each client only has three labeled training examples.]
{
\begin{tabular}{|l|c|c|c|c|}
\hline
Method & EO & DP & CAL & CON \\ \hline
FedAvg & 0.0253 & 0.1975 & 0.3842 & 0.0089 \\ \hline
FLinear & 0.0262 (-3.6\%) & 0.1892 (4.2\%) & 0.3253 (15.3\%) & 0.0088 (1.1\%) \\
FairFed & 0.0255 (-0.8\%) & 0.1956 (1.0\%) & 0.3760 (2.1\%) & 0.0084 (5.6\%) \\
FedFB & 0.0268 (-5.9\%) & 0.1897 (3.9\%) & 0.0385 (90.0\%) & 0.0087 (2.2\%) \\
Reweight & 0.0284 (-12.3\%) & 0.0226 (88.6\%) & 0.3805 (1.0\%) & 0.0112 (-25.8\%) \\ 
Gaussian & 0.0342 (-35.2\%) & 0.1931 (2.2\%) & 0.3858 (-0.4\%) & 0.0132 (-48.3\%) \\ 
Uniform & 0.0350 (-38.3\%) & 0.1934 (2.1\%) & 0.3940 (-2.5\%) & 0.0139 (-56.2\%) \\ 
\rowcolor{greyL}
\alg & 0.0157 (38.0\%) & 0.1856 (6.0\%) & 0.2811 (26.8\%) & 0.0076 (14.6\%) \\ \hline
\end{tabular}
}
\end{table*}

\begin{table}[h!]
\centering
\footnotesize
\caption{Transferability of different fairness metrics.}
\label{tab:transform_results}
\begin{tabular}{|l|c|c|c|c|}
\hline
Method & EO & DP & CAL & CON \\
    \hline
    FedAvg & 0.0611 & 0.0934 & 0.0343 & 0.1281 \\
    \hline
    \alg-EO & 0.0335 (45.1\%) & 0.0627 (32.9\%) & 0.0420 (-22.4\%) & 0.0527 (58.9\%) \\ 
    \alg-DP & 0.0421 (31.1\%) & 0.0266 (71.5\%) & 0.0393 (-14.6\%) & 0.0676 (47.2\%)  \\ 
    \alg-CAL & 0.0481 (21.3\%) & 0.7122 (-662.5\%) & 0.0224 (34.6\%) & 0.0537 (58.1\%)  \\ 
    \alg-CON & 0.1006 (-64.6\%) & 0.1413 (-51.3\%) & 0.1229 (-258.3\%) & 0.0948 (25.9\%)  \\
    \hline
\end{tabular}
\end{table}

\begin{table}[h!]
\centering
\footnotesize
\caption{Server uses different networks to generate the synthetic
dataset.}
\label{tab:model_arch}
\begin{tabular}{|l|c|c|c|c|}
\hline
Method & EO & DP & CAL & CON \\ \hline
FedAvg & 0.0611 & 0.0934 & 0.0343 & 0.1281 \\ \hline
WideMLP & 0.0339 (44.5\%) & 0.0263 (71.8\%) & 0.0213 (49.6\%) & 0.0951 (25.7\%) \\
DeepMLP & 0.0337 (44.8\%) & 0.0262 (71.9\%) & 0.0229 (50.7\%) & 0.0953 (25.6\%) \\
StandardMLP & 0.0335 (45.1\%) & 0.0266 (71.5\%) & 0.0224 (34.6\%) & 0.0948 (25.9\%) \\ \hline
\end{tabular}
\end{table}

\begin{table}[t]
\centering
\footnotesize
\caption{Server employs different strategies to collect global models.}
\label{tab:continuous_discrete}
\begin{tabular}{|l|c|c|c|c|}
\hline
Method & EO & DP & CAL & CON \\ \hline
FedAvg & 0.0611 & 0.0934 & 0.0343 & 0.1281 \\ \hline
Discrete & 0.0571 (6.5\%) & 0.0788 (15.6\%) & 0.0424 (-23.6\%) & 0.1196 (6.6\%) \\
Continuous & 0.0335 (45.1\%) & 0.0266 (71.5\%) & 0.0224 (34.6\%) & 0.0948 (25.9\%) \\ \hline
\end{tabular}
\end{table}

\begin{table}[t]
\centering
\scriptsize
\caption{Compare \alg with other debiasing methods.}
\label{tab:Regular}
\begin{tabular}{|l|c|c|c|c|}
\hline
Method & EO & DP & CAL & CON \\ \hline
FedAvg & 0.0611 & 0.0934 & 0.0343 & 0.1281 \\ \hline
Regular & 0.0561 (8.2\%) & 0.0874 (6.4\%) & 0.0289 (15.7\%) & 0.1253 (2.2\%) \\
\rowcolor{greyL}
\alg & 0.0335 (45.1\%) & 0.0266 (71.5\%) & 0.0224 (34.6\%) & 0.0948 (25.9\%) \\ \hline
\end{tabular}
\end{table}

\begin{table}[t]
\centering
\footnotesize
\caption{Server optimizes multiple fairness metrics simultaneously.}
\label{tab:moo_results}
\begin{tabular}{|l|c|c|c|c|}
\hline
    Method & EO & DP & CAL & CON \\
    \hline
    FedAvg & 0.0611 & 0.0934 & 0.0343 & 0.1281 \\
    \hline
    \rowcolor{greyL}
    \alg-Multi & 0.0363 (40.6\%) & 0.0457 (51.1\%) & 0.0223 (35.0\%) & 0.1003 (21.7\%) \\
    \hline
\end{tabular}
\end{table}

\begin{table}[t]
    \centering
      \footnotesize
    \caption{Storage cost of \alg.}
    \label{tab:storage_cost}
    \begin{tabular}{|l|c|c|c|}
        \hline
        Dataset & Global model (MB) & Synthetic dataset (MB) & Total (MB) \\
        \hline
        Income-Sex & 0.37 & 1.19 & 1.56 \\
        Employment-Sex & 0.37 & 1.19 & 1.56 \\
        Health-Sex & 0.37 & 1.19 & 1.56 \\
        Income-Race & 0.37 & 1.19 & 1.56 \\
        MNIST & 34.92 & 0.78 & 35.70 \\
        CIFAR-10 & 446.95 & 3.07 & 450.02 \\
        \hline
    \end{tabular}
\end{table}

\begin{figure*}[!t]
    \centering
    \includegraphics[width=\textwidth]{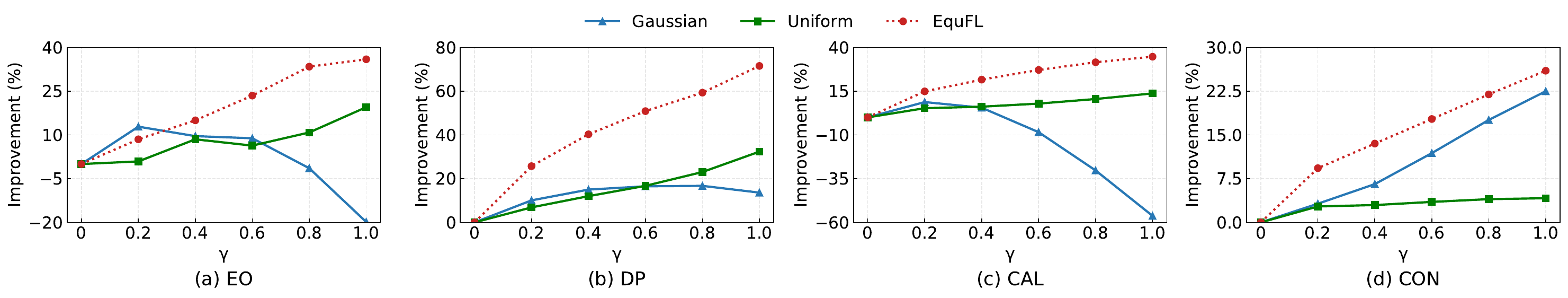}
    \caption{Impact of $\gamma$.}
    \label{fig:gamma}
\end{figure*}

\begin{figure*}[!t]
    \centering
    \includegraphics[width=\textwidth]{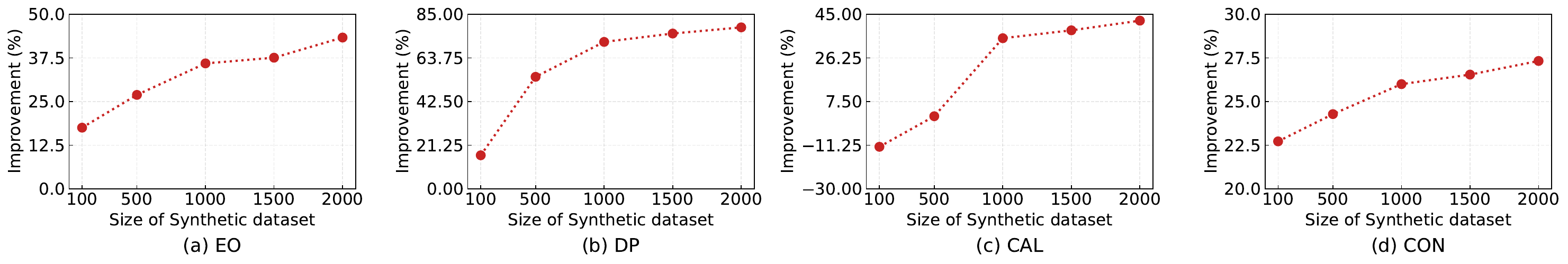}
    \caption{Impact of size of Synthetic dataset.}
    \label{fig:datasize}
\end{figure*}

\begin{figure*}[!t]
    \centering
    \includegraphics[width=\textwidth]{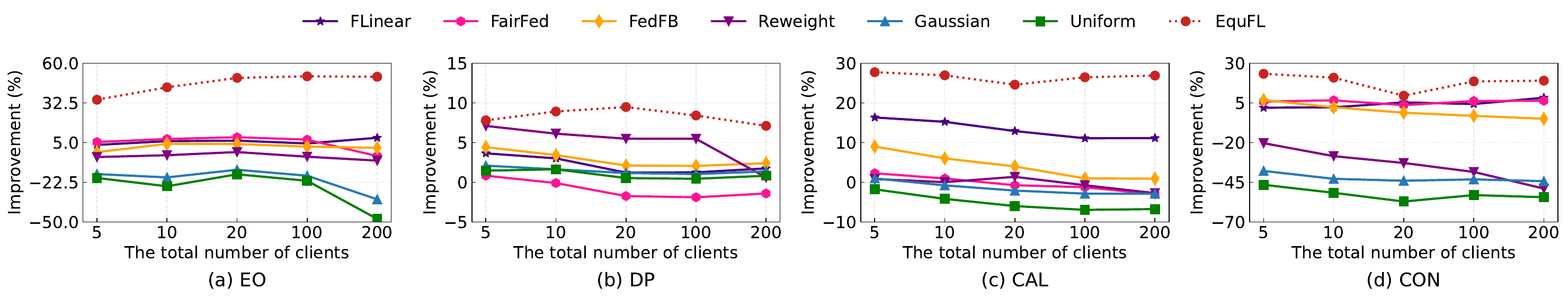}
    \caption{Impact of the total number of clients.}
    \label{fig:clientnum}
\end{figure*}

\begin{table}[t]
  \centering
  \footnotesize
  \caption{Performance of our \alg when clients add noise to their gradients before uploading them. $\sigma = 0$ indicates that no noise is added.}
  \label{tab:dp_noise_levels}
  \begin{tabular}{|l|c|c|c|c|}
    \hline
     Method & EO  & DP  & CAL  & CON \\
        \hline
     FedAvg ($\sigma{=}0$) & 0.0611 & 0.0934 & 0.0343 & 0.1281 \\ \hline
    \alg ($\sigma{=}0$) & 0.0335  & 0.0226 & 0.0224 & 0.0948 \\ 
    \alg ($\sigma{=}0.1$) &  0.0389&  0.0400&  0.0231 &  0.1036 \\ 
    \alg ($\sigma{=}0.2$) & 0.0407  & 0.0422  & 0.0251  & 0.1066\\ 
    \alg ($\sigma{=}0.3$) & 0.0456  & 0.0520  & 0.0305  & 0.1182 \\ \hline
  \end{tabular}
\end{table}

\begin{table}[t]
  \centering
  \footnotesize
  \caption{Test accuracy of the final global model learned by FedAvg when clients add noise to their gradients before uploading them. $\sigma = 0$ indicates that no noise is added. Note that the test accuracy of FedAvg remains the same across the ``EO'', ``DP'', ``CAL'', and ``CON'' metrics.}
  \label{tab:fedavg_dp_noise}
  \begin{tabular}{|l|c|}
    \hline
     Method & Test accuracy  \\
        \hline
    FedAvg ($\sigma{=}0$) & 0.7491  \\ 
    FedAvg ($\sigma{=}0.1$) &  0.7302  \\ 
    FedAvg ($\sigma{=}0.2$) &   0.7143\\ 
    FedAvg ($\sigma{=}0.3$) & 0.6960 \\ \hline
  \end{tabular}
\end{table}

\end{document}